\newtheorem{theorem}{Theorem}
\newtheorem{definition}{Definition}
\newtheorem{proposition}{Proposition}
\newtheorem{corollary}{Corollary}
\long\def\comment#1{}
\begin{document}
% The file aaai.sty is the style file for AAAI Press 
% proceedings, working notes, and technical reports.
%
\title{Efficient Task Collaboration with Execution Uncertainty}
\author{%\#2309
Dengji Zhao, Sarvapali D. Ramchurn, \and Nicholas R. Jennings\\
Electronics and Computer Science\\
University of Southampton\\
Southampton, SO17 1BJ, UK\\
\{d.zhao, sdr, nrj\}@ecs.soton.ac.uk
}
\maketitle
\begin{abstract}
\begin{quote}
We study a general task allocation problem, involving multiple agents that collaboratively accomplish tasks and where agents may fail to successfully complete the tasks assigned to them (known as execution uncertainty). The goal is to choose an allocation that maximises social welfare while taking their execution uncertainty into account. We show that this can be achieved by using the post-execution verification (PEV)-based mechanism % developed by Porter et al. (2008) %. We characterize the applicability of Porter's mechanism in a general setting and show that it is truthfully implementable
if and only if agents' valuations satisfy a multilinearity condition. We then consider a more complex setting where an agent's execution uncertainty is not completely predictable by the agent alone but aggregated from all agents' private opinions (known as trust). We show that PEV-based mechanism with trust is still truthfully implementable if and only if the trust aggregation is multilinear. %Finally, we link the results to the literature of general interdependent valuations.

%version 1
%We study a general task allocation problem, where there are multiple agents who are able to collaborate together to accomplish certain projects and in each project each agent is responsible for some sub-tasks or nothing. The completion of each project brings a value (e.g., a benefit or a loss) to each agent, but an agent might fail to successfully complete the sub-tasks assigned to her (known as execution uncertainty). The goal is to select one project that brings all agents the highest value considering their execution uncertainty. It has been shown that well-known Groves mechanisms cannot be applied here due to the valuation interdependencies created by the execution uncertainty. Therefore, Porter et al. (2008) proposed a solution for one specific task allocation setting. Generalizing Porter's mechanism, we find that Porter's mechanism is truthfully implementable in general if and only if agents' valuations satisfy a linearity condition. We then move a step further to consider a more advanced setting where an agent's execution uncertainty is not completely observable by the agent but aggregated from all agents' private opinions (known as trust). We show that Porter's mechanism with trust is still truthfully implementable if and only if the trust aggregation satisfies a similar linearity condition. %Finally, we compare our settings and results with the literature of general interdependent valuations.
\end{quote}
\end{abstract}

\section{Introduction}
We study a general task allocation problem, where multiple agents collaboratively accomplish a set of tasks. However, agents may fail to successfully complete the task(s) allocated to them (known as execution uncertainty). 
Such task allocation problems arise in many real-world applications such as transportation networks~\cite{Sandholm_1993}, data routing~\cite{Roughgarden_2007}, cloud computing~\cite{Armbrust_2010} and sharing economy~\cite{Belk_2014}. Execution uncertainty is typically unavoidable in these applications due to unforeseen events and limited resources, especially sharing economy applications such as \textit{Uber} and \textit{Freelancer}, where services are mostly provided by individuals with no qualifications or certifications.

In addition to the execution uncertainty underlying the task allocation problem, the completion of a task may also depend on the completion of other tasks, e.g., in \textit{Uber} a rider cannot ride without a driver offering the ride. The completion of the tasks of an allocation gives a (private) value to each agent, and our goal is to choose an allocation of tasks that maximises the total value of all agents, while taking their execution uncertainty into account.

%Without execution uncertainty, the allocation problem can be solved by the well-known Groves mechanisms~\cite{Groves_1973}. However, execution uncertainty ceases the applicability due to the valuation interdependencies it creates, e.g., a rider's value for a ride will largely depend on whether the driver will successfully finish the drive. To combat the problem, Porter et al. (\citeyear{Porter_2008}) and their following work have proposed a solution which pays agents according to their final task execution, rather than just what they have reported (which is what Groves mechanisms do).

It has been shown that traditional mechanism design (based on Groves mechanisms~\cite{Groves_1973}) is not applicable to settings that involve execution uncertainty~\cite{Porter_2008,Conitzer_2014}. This is because execution uncertainty implies interdependencies between the agents' valuations (e.g., a rider's value for a ride will largely depend on whether the driver will successfully finish the drive). To combat the problem, Porter et al. (\citeyear{Porter_2008}) have proposed a solution based on post-execution verification (PEV), %which pays agents according to their actual execution, 
which is broadly aligned with type verification~\cite{Nisan_2001}. The essential idea of the PEV-based mechanism is that agents are paid according to their task executions, rather than what they have reported.

While Porter et al.~(\citeyear{Porter_2008}) considered a single task requester setting where one requester has multiple tasks that can be completed by multiple workers, Stein et al.~(\citeyear{Stein_2011}) and Conizter and Vidali~(\citeyear{Conitzer_2014}) studied similar settings but considering workers' uncertain task execution time. Moreover, Ramchurn et al.~(\citeyear{Ramchurn_2009}) looked at a more complex setting where each agent is a task requester and is also capable to complete some tasks for the others. %, and as well as \cite{Feige_2011}'s single worker and multiple requesters setting, where the uncertainty comes from the requesters.
%public goods problem where a set of agents decide whether or not to build costly public goods~\cite{Maniquet_2010}. 
Except for different settings, all the solutions in these studies are PEV-based. However, these results may not applicable in other different problem settings where, for example, agents' valuations may have externalities, e.g., agent A prefers working with B to others~\cite{Jehiel_1999}, and an agent may even incur some costs without doing any task, e.g., a government is building a costly public good~\cite{Maniquet_2010}. %Moreover, instead of designing a specific task allocation mechanism such as~\cite{Stein_2011,Conitzer_2014}, we consider all abstract task allocations, which can be further defined for different applications. 
%Instead of designing a specific task allocation/schedule for dedicated settings like~\cite{Stein_2011,Conitzer_2014}, we characterise the applicability of {\color{red}Porter's-alike mechanisms} in general. 

%Therefore, in this paper, we study a more general task allocation setting, which does not make any assumption on agents' valuation setting. Especially, the valuation may have externalities~\cite{Jehiel_1999}, and an agent may even be penalised without doing any task (e.g., building a public good~\cite{Maniquet_2010}). Moreover, instead of designing a specific task allocation mechanism such as~\cite{Stein_2011,Conitzer_2014}, we consider all abstract task allocations, which can be further defined for different applications. 
%Instead of designing a specific task allocation/schedule for dedicated settings like~\cite{Stein_2011,Conitzer_2014}, we characterise the applicability of {\color{red}Porter's-alike mechanisms} in general. 

Therefore, in this paper, we study a more general task allocation setting where agents' valuations are not constrained. 
Under this general setting, we characterise the applicability of the PEV-based mechanism.
We show that the PEV-based mechanism is applicable (truthfully implementable) if and only if %agents' valuations satisfy a multilinear condition (i.e., they are risk-neutral with respect to their execution uncertainty). 
agents are risk-neutral with respect to their execution uncertainty.
Moreover, we consider a more complex setting where an agent's ability to successfully complete a task is judged by all agents' private opinion (known as trust) as proposed by~\cite{Ramchurn_2009}. 
Trust-based information exists in many real-world applications and plays an important role in decision making~\cite{Aberer_2001}. %Resnick_2002
We show that the PEV-based mechanism is still applicable with trust if and only if the trust aggregation is multilinear. %, generalizing the linear form aggregation studied in \cite{Ramchurn_2009}. %As mentioned, our setting is an interdependent valuation setting (with some special feature). Therefore, we compare our results to the existing results for general interdependent valuations. We find out that existing interdependent valuation model cannot model the dependencies between tasks, and the related existing results are also not applicable in our setting.
This characterisation can help in designing efficient mechanisms for task allocation problems that have not been addressed yet.

%The remainder of the paper is as follows. After presents the model and the solution concepts, we first characterise the applicability of Porter's mechanisms. Then, we extend the results to trust-based environments. Finally, we link our results to the related literature.

\comment{
\noindent Ridesharing has been touted as a key mechanism to optimise transportation systems since the 1940s. By having multiple road users share a car, it may significantly reduce fuel costs, traffic congestion, and CO$_2$ emissions~\cite{CHAN_2012}. %\cite{Jacobson_2009,CHAN_2012}. 
Moreover, a number of private ridesharing services such as \textit{Uber} and \textit{Lyft} have introduced real-time online booking systems to allow consumers to book rides seamlessly. % which recently attracted more attentions from both business investigators and the end users largely because of the availability of technologies, say, smart phones and social networking services, and more importantly our great desire for a better living environment. More and more people commute by other transport modes such as ridesharing rather than driving. Instead of owning a car, many people are inclined to share or rent cars especially those that are environment-friendly such as electric vehicles~\cite{GEVO_2013}, which makes ridesharing become more natural. 
Despite such efforts, however, the number of users of ridesharing services has not significantly grown over the years.\footnote{The share of US workers commuting by ridesharing/carpooling has declined from 20.4\% in 1970 to just 9.7\% in 2011 (the US Census).} There are a number of reasons for this but here we focus on one of the key challenges: these actors must find it more convenient to share a ride rather than take their own car or other transports. An important factor that affects convenience is the ability to plan trips at short notice, but also to be able to deal with ride cancellations. Unfortunately, in current ridesharing services, if there is a no-show of a driver or a rider, the rider or the driver may be significantly penalized (e.g., \textit{Uber} and \textit{Lyft} charge a user $5$ to $10$ dollars for cancelling a ride in the US). Moreover, both \textit{Uber} and \textit{Lyft} operate like taxi companies with dedicated drivers and standard but low fare rates. They indeed motivate riders to use their services, but hardly involve many low-occupancy vehicles on the roads. Therefore, it is crucial that ridesharing systems are designed to incentivize both riders and drivers to use the services while accounting for the execution uncertainty of their trips. 

To date, researchers have proposed auction-based ridesharing systems that allow more people to participate and also shift the effort of arranging rides from the users to the system~\cite{Kamar_2009,Kleiner_2011,Zhao_2014}. 
{Given people's travel plans/preferences, these auction-based systems automatically compute their sharing schedules and their payments.} 
However, these auction-based systems are vulnerable to manipulations and, crucially, do not deal with the uncertainties described. Hence, in this paper, we study auction-based ridesharing mechanisms that aim to incentivize commuters in such dynamic and uncertain domains and seek to find mechanisms that are robust to manipulations. 

Similar \emph{execution uncertainty} has been addressed in task allocation domains~\cite{Porter_2008,Ramchurn_2009,Stein_2011,Feige_2011,Conitzer_2014}. However, the uncertainty modelled there is agents' ability to complete a task (i.e., whenever an agent is allocated a task, she will always incur the cost of executing the task regardless of her ability to complete it). In contrast, the uncertainty in the ridesharing context is commuters' ``willingness" rather than their ability to undertake their trips. Hence, there is no internal cost to a rider/driver if she does not want to undertake her trip. Moreover, there is no collaboration between agents for completing a task, while in ridesharing commuters have to collaborate to finish a shared trip. {In other domains, mechanisms with verification, e.g., \cite{Nisan_1999,Rose_2012,Caragiannis_2012,Fotakis_2013}, have been designed to verify agents' types after the execution of their actions. However, they are not applicable in ridesharing, because a commuter's uncertainty of undertaking her trip is temporal and is not verifiable from whether she commits.}

Against this background, we investigate incentive mechanisms for the ridesharing domain and attempt to identify scenarios in which these mechanisms will be robust to manipulations. We characterise such scenarios specifically in terms of the commuters' valuation functions (i.e., the value they attribute to rides%{\color{red}based on their knowledge of the others' probability of undertaking trips}
). By so doing, we develop a framework to study all valuation settings which, in turn, can inform the design of ridesharing booking systems. Hence, our work advances the state of the art in the following ways:
\begin{itemize}
\item We show that the Groves mechanisms are only truthful in the very special cases: either none of the commuters' valuation depends on the others' probability of undertaking their trips, or the probabilities are publicly known. 
\item Since in general settings, it is impossible to design truthful and efficient mechanisms, we propose an ex-post truthful and efficient mechanism where a commuter is rewarded if she undertakes her trip, otherwise she pays the loss she causes to the others for not undertaking her trip. Ex-post truthfulness is the best incentive we can provide here without sacrificing social welfare.
\item We then identify a sufficient and necessary condition where the proposed mechanism is ex-post truthful. This condition covers a very rich class of valuation settings in practice, but it does eliminate some interesting cases where commuters deliberately choose to not collaborate/commit under certain situations.
\end{itemize}

%Given the above criteria of the system design, we first investigate the well-known Vickrey-Clarke-Groves (VCG) mechanism and show that VCG can not be even Nash truthful (i.e., Nash equilibrium) for almost all valuation domains that are monotonically changing with respect to commuters' probabilities of commitment. However, we furthermore show that, if commuters' uncertainty of commitment is publicly known, VCG is truthful (i.e., reporting trip truthfully is a dominant strategy) for all valuation domains without externalities. A publicly known uncertainty for a commuter might represent a kind trust for the commuter, which can be calculate from the commuter's history behaviour and the ratings of their sharing experiences. Given the limited applicability of VCG, we further proposed another mechanism which pays commuters based on their realized commitment of their trips, rather than their reported probabilities of commitment which might be different from their true/realized probability of commitment. We prove that the new mechanism is Nash truthful for very broad valuation domains, including a large portion of the valuation domains that are not Nash truthful for VCG. Lastly, we present some intuitive case studies to further clarify and justify our findings.

The remainder of the paper is organized as follows. Section~\ref{sect_model} presents the ridesharing model and the desirable properties of a ridesharing system. Section~\ref{sect_vcg} investigates the applicability of the Groves mechanisms. Sections~\ref{sect_com-pay} and \ref{sect_linearity_cont} propose an new mechanism and identify a sufficient and necessary condition to truthfully implement it. We conclude in Section~\ref{sect_con}.
}
\section{The Model}
\label{sect_model}
\noindent We study a task allocation problem where there are $n$ agents denoted by $N = \{1, ..., n\}$ and a finite set of task allocations $T$\footnote{$T$ is the task allocation outcome space, which may contain all feasible task allocations that agents can execute. The precise definition depends on the applications.}. Each allocation $\tau \in T$ is defined by $\tau = (\tau_i)_{i\in N}$, where $\tau_i$ is a set of tasks assigned to agent $i$. Let $\tau_i = \emptyset$ if there is no task assigned to $i$ in $\tau$. 
%Note that we do not require that $\tau_i \cap \tau_j = \emptyset$ for all $i\neq j \in N$\footnote{It is possible to assign one task to more than one agent and let them try independently to increase the success rate (e.g. \cite{Stein_2011}).}. 
For each allocation $\tau$, agent $i$ may fail to successfully complete her tasks $\tau_i$, which is modelled by $p_i^\tau \in [0,1]$, the probability that $i$ will successfully complete her tasks $\tau_i$. Let $p_i =(p_i^\tau)_{\tau\in T}$ be $i$'s \textbf{probability of success (PoS)} profile for all allocations $T$, and $p^\tau = (p_i^\tau)_{i\in N}$ be the PoS profile of all agents for allocation $\tau$. 

Note that the completion of one task in an allocation may depend on the completion of the other tasks. Take the delivery example in Figure~\ref{eg_task} with two agents $1, 2$ delivering one package from $S$ to $D$. There are two possible task allocations to finish the delivery: %$\tau = (\tau_1 = \{S\rightarrow A\}, \tau_2 = \{A\rightarrow D\})$ and $\tau^\prime = (\tau^\prime_1 = \emptyset, \tau^\prime_2 = \{S\rightarrow D\})$. 
$\tau$ is collaboratively executed by agents $1$ and $2$, while $\tau^\prime$ is done by agent $2$ alone. It is clear that task $\tau_2$ depends on $\tau_1$. However, $p_2^\tau$ only indicates $2$'s PoS for $\tau_2$, assuming that $1$ will successfully complete $\tau_1$. That is, $p_i^\tau$ does not include task dependencies and it only specifies $i$'s probability to successfully complete $\tau_i$, if $\tau_i$ is ready for $i$ to execute. 

For each allocation $\tau \in T$, the completion of $\tau$ brings each agent $i$ a value (either positive or negative), which combines costs and benefits. 
%$i$'s costs for executing her tasks $\tau_i$ and any social benefits she will enjoy/lose from the completion of $\tau$. 
For example, building a train station near one's house may costs one's money as well as a peaceful living environment, but it may reduce the inconvenience of commuting. Considering the execution uncertainty, agent $i$'s valuation is modelled by a function $v_i: T \times [0,1]^N \rightarrow \mathbb{R}$, which assigns a value for each allocation $\tau$, for each PoS profile $p^\tau = (p_i^\tau)_{i\in N}$.

\begin{figure}
\centering
\psscalebox{1.0 1.0} % Change this value to rescale the drawing.
{
\begin{pspicture}%(0,-0.88481444)(4.98,0.88481444)
(0,-1.6089356)(4.98,1.3589356)
\psdots[linecolor=black, dotsize=0.22](0.4,-0.08481445)
\psdots[linecolor=black, dotsize=0.22](4.4,-0.08481445)
\psdots[linecolor=black, dotsize=0.2](2.4,-0.48481447)
\psline[linecolor=black, linewidth=0.04, arrowsize=0.05cm 4.0,arrowlength=1.52,arrowinset=0.0]{->}(0.4,-0.08481445)(2.4,-0.48481447)
\psline[linecolor=black, linewidth=0.04, arrowsize=0.05cm 4.0,arrowlength=1.52,arrowinset=0.0]{->}(2.4,-0.48481447)(4.4,-0.08481445)
\psline[linecolor=black, linewidth=0.04, arrowsize=0.05cm 4.0,arrowlength=1.52,arrowinset=0.0]{->}(0.4,-0.08481445)(3.2,0.7151855)(4.4,-0.08481445)
\rput[bl](0.0,-0.08481445){$S$}
\rput[bl](4.6,-0.08481445){$D$}
\rput[bl](2.3,-0.95481444){$A$}
\rput[bl](1.15,-0.60481444){$\tau_1$}
\rput[bl](3.3,-0.60481444){$\tau_2$}
\rput[bl](2.6,0.7151855){$\tau^\prime_2$}
\rput[bl](0.8,-1.5){$\tau = (\tau_1 = \{S\rightarrow A\}, \tau_2 = \{A\rightarrow D\})$}
\rput[bl](0.8,-1.9){$\tau^\prime = (\tau^\prime_1 = \emptyset, \tau^\prime_2 = \{S\rightarrow D\})$}
\rput[bl](-1.2,-1.6){$Allocations$:}
\end{pspicture}
}
\caption{Package delivery from $S$ to $D$ with two agents $1,2$\label{eg_task}}
\end{figure}

\begin{table}
\centering
	\begin{tabular}{| c | c | c | c |}
	  \hline			
	  agent & allocation & $p_i$ & $v_i$ \\ \hline
	   \multirow{2}{*}{$1$} & $\tau$ & $p_1^{\tau} = 0$ & $v_1(\tau, p^{\tau}) = 0$ \\ \cline{2-4}
	    & $\tau^\prime$ & $p_1^{\tau^\prime} = 0$ & $v_1(\tau^\prime, p^{\tau^\prime}) = 0$ \\ \hline
	   \multirow{2}{*}{$2$} & $\tau$ & $p_2^{\tau} = 1$ & $v_2(\tau, p^{\tau}) = p_1^{\tau}\times p_2^{\tau}$ \\ \cline{2-4}
	   & $\tau^\prime$ & $p_2^{\tau^\prime} = 0.5$ & $v_2(\tau^\prime, p^{\tau^\prime}) = p_2^{\tau^\prime}$ \\
	  \hline  
	\end{tabular}
	\caption{A valuation setting for the exmaple in Figure~\ref{eg_task}\label{tab_eg1}}
\end{table}

For each agent $i$, we assume that $v_i$ and $p_i$ are privately observed by $i$, known as $i$'s \textbf{type} and denoted by $\theta_i = (v_i, p_i)$. Let $\theta = (\theta_i)_{i\in N}$ be the type profile of all agents, $\theta_{-i}$ be the type profile of all agents except $i$, and $\theta = (\theta_i, \theta_{-i})$. Let $\Theta_i$ be $i$'s type space, $\Theta = (\Theta_i)_{i\in N}$ and $\Theta_{-i} = (\Theta_j)_{j\neq i \in N}$. 

Given the above setting, our goal is to choose one task allocation from $T$ that maximises all agents' valuations, i.e., a socially optimal allocation. This can be achieved (according to the revelation principle~\cite{Myerson_2008}) by designing a mechanism that directly asks all agents to report their types and then chooses an allocation maximising their valuations. However, agents may not report their types truthfully. % if it is their interests to do so. 
Therefore, we need to incentivize them to reveal their true types, which is normally achieved by choosing a specific allocation of tasks and an associated monetary transfer to each agent. The direct revelation \textbf{allocation mechanism} is defined by a task \textbf{allocation choice} function $\pi: \Theta \rightarrow T$ and a \textbf{payment} function $x = (x_1, ..., x_n)$ where $x_i: \Theta \rightarrow \mathbb{R}$ is the payment function for agent $i$.

\subsection{Solution Concepts}
The goal of the allocation mechanism is to choose a task allocation that maximises the valuation of all agents, i.e., the social welfare. Since the agents' types are privately observed by the agents, the mechanism is only able to maximise social welfare if it can receive their true types. Therefore, the mechanism needs to incentivize all agents to report their types truthfully. Moreover, agents should not lose when they participate in the task allocation mechanism, i.e., they are not forced to join the allocation. In the following, we formally define these concepts.

We say an allocation choice $\pi$ is efficient if it always chooses an allocation that maximises the expected social welfare for all type report profiles.
\begin{definition}
Allocation choice $\pi$ is \textbf{efficient} if and only if for all $\theta \in \Theta$, for all $\tau^\prime \in T$, let $\tau = \pi(\theta)$, we have:
$$\sum_{i\in N} v_i(\tau, p^\tau) \geq \sum_{i\in N} v_i(\tau^\prime, p^{\tau^\prime})$$ where $p^\tau = (p_i^\tau)_{i\in N}$, and $p^{\tau^\prime} = (p_i^{\tau^\prime})_{i\in N}$.
\end{definition}

Note that the expected social welfare calculated by $\pi$ is based on the agents' reported types, which are not necessarily their true types. However, agents' actual/realized valuation for an allocation only depends on their true types.

Given the agents' true type profile $\theta$, their reported type profile $\hat{\theta}$ and the allocation mechanism $(\pi, x)$, agent $i$'s expected \textbf{utility} is quasilinear and defined as: 
$$u_i(\theta_i, \pi(\hat{\theta}), x_i(\hat{\theta}), p^{\pi(\hat{\theta})}) = v_i(\pi(\hat{\theta}), p^{\pi(\hat{\theta})}) - x_i(\hat{\theta}),$$
where $p^{\pi(\hat{\theta})} = (p_i^{\pi(\hat{\theta})})_{i\in N}$ is agents' true PoS profile for task $\pi(\hat{\theta})$ and $\hat{p}^{\pi(\hat{\theta})} = (\hat{p}_i^{\pi(\hat{\theta})})_{i\in N}$ is what they have reported.

\begin{definition}
\label{def_ir}
Mechanism $(\pi, x)$ is \textbf{individually rational} if for all $i\in N$, for all $\theta\in \Theta$, for all $\hat{\theta}_{-i} \in \Theta_{-i}$, $u_i(\theta_i, \pi(\theta_i, \hat{\theta}_{-i}), x_i(\theta_i, \hat{\theta}_{-i}), p^{\pi(\theta_i, \hat{\theta}_{-i})}) \geq 0$.
\end{definition}

That is, an agent never receives a negative expected utility in an individually rational mechanism % (i.e., never loses money by participation) 
if she reports truthfully, no matter what others report. 

Furthermore, we say the mechanism is \textbf{truthful} (aka \textit{dominant-strategy incentive-compatible}) if it always maximises an agent's expected utility if she reports her type truthfully no matter what the others report, i.e., reporting type truthfully is a dominant strategy. It has been shown that truthful and efficient mechanism is impossible to achieve in a special settings of the model~\cite{Porter_2008}. Instead we focus on a weaker solution concept (but still very valid) called \textit{ex-post truthful}, which requires that reporting truthfully maximises an agent's expected utility, if everyone else also reports truthfully (i.e., reporting truthfully is an ex-post equilibrium).
\begin{definition}
Mechanism $(\pi, x)$ is \textbf{ex-post truthful} if and only if for all $i\in N$, for all $\theta \in \Theta$, for all $\hat{\theta}_i \in \Theta_i$, we have $u_i(\theta_i, \pi(\theta_i, {\theta}_{-i}), x_i(\theta_i, {\theta}_{-i}), p^{\pi(\theta_i, {\theta}_{-i}))}) \geq u_i(\theta_i, \pi(\hat{\theta}_i, {\theta}_{-i}), x_i(\hat{\theta}_i, {\theta}_{-i}), p^{\pi(\hat{\theta}_i, {\theta}_{-i})}).$
\end{definition}
%{\color{red}Note that ex-post truthful is stronger than \emph{Bayes-Nash truthful} which assumes that all agents know the correct probabilistic distribution of each agent's type.}
%\begin{definition}
%Mechanism $(\pi, x)$ is \textbf{ex-post truthful} if for all commuter $i$, all $\theta_i$, all $\hat{\theta}_i$, all $\theta_{-i}$, we have $$u_i(\theta_i, (\theta_i, \theta_{-i}), \pi, x, p) \geq u_i(\theta_i, (\hat{\theta}_i, \theta_{-i}), \pi, x, p).$$
%\end{definition}

\subsection{Failure of the Groves Mechanism}
\label{sect_vcg}
\noindent The Groves mechanism is a well-known class of mechanisms that are efficient and truthful in many domains~\cite{Groves_1973}. However, they are not directly applicable in our domain due to the interdependent valuations created by the execution uncertainty. As we will see later, a simply variation of the Groves mechanism can solve the problem. In the following, we briefly introduce the Groves mechanism and show why it cannot be directly applied.

Given agents' type report profile $\theta$, Groves mechanisms compute an efficient allocation $\pi^*(\theta)$ ($\pi^*$ denotes the efficient allocation choice function) and charge each agent $i$
\begin{equation}
 x_i^{Groves}({\theta}) = h_i({\theta}_{-i}) - V_{-i}({\theta}, \pi^*)
\end{equation} 
where
\begin{itemize}
\item $h_i$ is a function that only depends on ${\theta}_{-i}$,
\item $V_{-i}({\theta}, \pi^*) = \sum_{j \neq i} {v}_j(\pi^*({\theta}), p^{\pi^*({\theta})})$ is the social welfare for all agents, excluding $i$, under the efficient allocation $\pi^*({\theta})$.
\end{itemize}

Since $h_i$ is independent of $i$'s report, we can set $h_i({\theta}_{-i}) = 0$, and then each agent's utility is $v_i(\pi^*(\theta)) + V_{-i}({\theta}, \pi^*)$, which is the social welfare of the efficient allocation. %which is maximised in domains without valuation interdependency, if the agent reports truthfully. However, in this domain, agents' valuations are normally interdependent via their probability of success. It is not hard to show that as soon as there exists one agent whose valuation depends on another agent's probability of success (PoS), Groves mechanisms cannot even be truthfully implemented in an ex-post equilibrium. 
The following example shows that the Groves mechanism is not directly applicable in our task allocation setting. 

Take the example from Figure~\ref{eg_task} with the setting from Table~\ref{tab_eg1}. If both $1$ and $2$ report truthfully, the efficient allocation is $\tau^\prime$ with social welfare $0.5$ (which is also their utility if $h_i({\theta}_{-i}) = 0$). Now if $1$ misreported $\hat{p}_1^\tau > 0.5$, then the efficient allocation will be $\tau$ with social welfare $\hat{p}_1^\tau > 0.5$, i.e., $1$ can misreport to receive a higher utility.

\comment{
That is, reporting truthfully is not a dominant strategy even if everyone else reports truthfully. This kind of impossibility results have been discussed in similar task allocation domains with execution uncertainty~\cite{Porter_2008,Ramchurn_2009,Conitzer_2014} and as well as in a general interdependent valuation setting~\cite{Jehiel_2001}. Let's consider an example to gain some intuition of the impossibility results: consider two agents $1,2$ and a task $\tau$, and their types are described in Table~\ref{tab_eg1}. agent $1$'s valuation only depends on the completion of agent $1$'s tasks and a fixed cost $c>0$ for attempting her tasks, while agent $2$'s valuation depends the completion of tasks assigned to both agents. Assume that $\tau$ is an efficient task allocation, then agent $1$ can simply misreport her PoS $\hat{p}_1^{\tau} > p_1^{\tau}$ to receive a higher payment, because she can increase agent $2$'s social welfare without changing the allocation.}

%%%%%%%%%%%%%%%%%%%%%%%comment%%%%%%%%%%%%%%%%%%%%%%%
\comment{
\begin{theorem}
\label{thm_Groves_notIC}
The Groves mechanism is not ex-post truthful if %for all $i\in N$, $v_i$ satisfies no-externality and 
there exists $j\in N$ s.t. $v_j$ is not external-commit-independent.
\end{theorem}
\begin{proof}
Given that $j$'s valuation is not external-commit-independent, there exist a report profile $\theta$, an allocation $\pi$, and a probability profile $\bar{p} = (\bar{p}_i)_{i\in N}$ where $\bar{p}_i \in [0,1]$ such that $\bar{p}_j = p_j$ and $v_j(\pi(\theta), \bar{p}) \neq v_j(\pi(\theta), p)$. Without loss of generality assume that $\bar{p}$ only differs from $p$ in $k$'s probability of commitment, i.e., $\bar{p}_k \neq p_k$ and $\bar{p}_{-k} = p_{-k}$. 

Under efficient allocation $\pi^*$, it is not hard to find a trip profile $\hat{\theta}_{-j}$ such that $\hat{p}_{-j} = p_{-j}$ and $\pi^*(\theta_j, \hat{\theta}_{-j}) = \pi(\theta)$. We can choose $\hat{\theta}_{-j}$ by setting $\hat{v}_i(\pi(\theta), p)$ to a sufficiently large value for each $i\neq j$. Moreover, we require that the allocation $\pi^*(\theta_j, \hat{\theta}_{-j})$ does not change if $k$ reported a different probability of commitment $\bar{p}_k$ rather than $p_k$, which can be achieved by setting $k$'s valuation $\hat{v}_k(\pi(\theta), \bar{p})$ to a sufficiently large value (no matter whether $\hat{v}_k$ is external-commit-independent).
%, i.e. the allocation for $j$ and the probability that commuters, involved in $j$'s allocation, come to share with $j$ is the same under $\theta$ and $(\theta_j, \theta_k, \hat{\theta}_{-j,k})$. Given that all valuations satisfy no-externality, the setting $\hat{\theta}_{-j,k}$ can be achieved by assigning very high valuations to the allocations in $\pi(\theta)$ related to $j$ and very low valuations to other allocations, although the trips for $j$ and $k$ are fixed.

In what follows, we show that there exist situations where agent $k$ is incentivized to misreport. Under trip profile $(\theta_j, \hat{\theta}_{-j})$, we know that $k$ can change $j$'s valuation without changing the allocation by reporting a different probability of commitment. Regardless of the changes of the other agents' valuations when $k$ changes her probability of commitment, there always exists a situation s.t. $v_j(\pi(\theta), p) + \sum_{i\in N\setminus\{j,k\}} \hat{v}_i(\pi(\theta), p) \neq v_j(\pi(\theta), \bar{p}) + \sum_{i\in N\setminus\{j,k\}} \hat{v}_i(\pi(\theta), \bar{p})$, even if the valuations of all commuters except $j$ are external-commit-independent, i.e., $\sum_{i\in N\setminus\{j,k\}} \hat{v}_i(\pi(\theta), p) = \sum_{i\in N\setminus\{j,k\}} \hat{v}_i(\pi(\theta), \bar{p})$. \\If $v_j(\pi(\theta), p) + \sum_{i\in N\setminus\{j,k\}} \hat{v}_i(\pi(\theta), p) < v_j(\pi(\theta), \bar{p}) + \sum_{i\in N\setminus\{j,k\}} \hat{v}_i(\pi(\theta), \bar{p})$, then commuter $k$ of true probability of commitment $p_k$ would report $\bar{p}_k \neq p_k$ to gain a better utility. %$v_j(\pi(\theta), \bar{p}) + \sum_{i\in N\setminus\{j,k\}} \hat{v}_i(\pi(\theta), \bar{p}) + \hat{v}_k(\pi(\theta), p) - h_k(\theta_j, \hat{\theta}_{-j,k})$. 
Otherwise, %if $v_j(\pi(\theta), p) + \sum_{i\in N\setminus\{j,k\}} \hat{v}_i(\pi(\theta), p) > v_j(\pi(\theta), \bar{p}) + \sum_{i\in N\setminus\{j,k\}} \hat{v}_i(\pi(\theta), \bar{p})$, then 
commuter $k$ of true probability of commitment $\bar{p}_k$ would report $p_k$ to gain a better utility. In both situations, we assume that the other commuters truthfully report their trips.
\qed
\end{proof}

Theorem~\ref{thm_Groves_notIC} shows that the Groves mechanisms \emph{cannot} be truthfully implemented in an ex-post equilibrium even if there is only one commuter whose valuation depends on the others' probability of commitment. This is a rather negative result as it says that Groves mechanisms are not applicable in all valuation settings that are interdependent via their probability of commitment. However, Theorem~\ref{thm_Groves_IC} shows that if their uncertainty of commitment is known by the mechanism, i.e., the interdependency of their valuations is known by the mechanism, then reporting valuation truthfully is still a dominant strategy in the Groves mechanisms. In some real-world applications, the commuters' probability of commitment might be computable by the ridesharing system from, say, their history participations/trips.

\begin{theorem}
\label{thm_Groves_IC}
The Groves mechanism is truthful if for all $i\in N$, %$v_i$ satisfies no-externality and 
%the probability of commitment 
$p_i$ is known by the mechanism.
\end{theorem}
%\begin{proof}
%See Appendix.\qed
%\end{proof}
\begin{proof}
According to Proposition $9.27$ from \cite{nisan_algorithmic_2007}, we need to show that for all profiles $\theta$, for all $i\in N$:
\begin{enumerate}
\item $x_i^{Groves}(\theta)$ does not depend on $\theta_i$, but only on the alternative allocation $\pi^*(\theta)$. That is, for all $\hat{\theta}_i \neq \theta_i$, if $\pi^*(\hat{\theta}_i, \theta_{-i}) = \pi^*(\theta)$, then $x^{Groves}_i(\hat{\theta}_i, \theta_{-i}) = x^{Groves}_i(\theta)$;
\item $i$'s utility is maximised by reporting $\theta_i$ truthfully.
\end{enumerate}

%Since all commuters' valuations satisfy no-externality, given allocation $\pi(\theta)$, $i$ may only influence others' valuation via its probability of commitment $p_i$. 
Given that $p_i$ is known by the mechanism (i.e., $i$ does not need to report $p_i$), $i$ can only change others' valuations by changing the allocation, and therefore $x_i^{Groves}(\theta)$ does not depend on $\theta_i$, but only on the allocation $\pi^*(\theta)$. This is not the case when $p_i$ is privately known because, as shown in Theorem~\ref{thm_Groves_notIC}, $i$ may change the other commuters' valuation without changing the allocation.

For each commuter $i$, her expected utility is $v_i(\pi^*(\theta), p) - x^{Groves}_i(\theta) = v_i(\pi^*(\theta), p) + V_{-i}(\theta, \pi^*) - h_i(\theta_{-i})$, where the first two terms together are the social welfare and $h_i(\theta_{-i})$ is independent of $\theta_i$. Since the allocation $\pi^*$ is efficient, so the social welfare and therefore $i$'s utility is maximised when $i$ reports truthfully. %Thus, $i$'s utility is maximised by reporting $\theta_i$ truthfully regardless of what others report.
\qed
\end{proof}

%As $h_i$ in the payment of the Groves mechanism is independent on $i$'s report, we can choose any value that is not related to $i$'s report, but it might give commuters negative utility if $h_i$ is too large or the system a huge deficit if $h_i$ is too small. 
It is worth mentioning that Theorems~\ref{thm_Groves_notIC} and \ref{thm_Groves_IC} do not rely on the form of $h_i$ in $x^{Groves}_i$. %For achieving other properties such as \emph{individual rationality}, 
We normally set $h_i$ to be the maximum social welfare that the others can obtain without $i$'s participation, which is known as the Clarke pivot rule (the corresponding mechanism is known as VCG). The Clarke pivot rule guarantees that all commuters' expected utilities are non-negative, i.e., it satisfies individual rationality, and also charges all commuters the maximum amount without violating individual rationality. }
\section{Applicability of PEV-Based Mechanisms}
\label{sect_Porters}
\noindent As shown in the last section, the Groves mechanisms are not directly applicable due to the interdependency of agents' valuations created by their probability of success (PoS). The other reason 
%why Groves mechanisms cannot prevent agents' manipulations 
is that the Groves payment is calculated from agents' reported PoS rather than their realized/true PoS. 

The fact is that we can partially verify their reported PoS by delaying their payments until they have executed their tasks (post-execution verification). To utilize this fact, Porter et al.~(\citeyear{Porter_2008}) have proposed a variation of the Groves mechanism which pays an agent according to their actual task completion, rather than what they have reported. 
More specifically, we define two payments for each agent: %according to the realized completion of her tasks: 
a reward for successful completion and a penalty for non-completion. %one for successfully completing her tasks and the other for failing the tasks. 
%The payment for an agent's success is a kind of reward, while the one when she fails works like a penalty.
Let us call this mechanism \textit{PEV-based mechanism}.

Porter et al.~(\citeyear{Porter_2008}) have considered a simple setting where there is one requester who has one or multiple tasks to be allocated to multiple workers each of whom have a fixed cost to attempt each task. Later, Ramchurn et al.~(\citeyear{Ramchurn_2009}) extended Porter et al.'s model to a multiple-requester setting (a combinatorial task exchange) and especially considered trust information which will be further studied later in this paper. Our setting generalises both models and allows any types of valuations and allocations. In the following, we formally define the PEV-based mechanism and analyse its applicability in our general domain.

Given the agents' true type profile ${\theta}$ and their reports $\hat{\theta}$, let $p_{-i}^\tau$ be the true PoS profile of all agents except $i$ for task $\tau$, $p^\tau = (p_i^\tau, p_{-i}^\tau)$, and $\hat{p}_{-i}^\tau, \hat{p}^\tau$ be the corresponding reported, PEV-based payment $x^{PEV}$ for each agent $i$ is defined as:

\begin{equation}
\label{eq_commit_pay}
\scalebox{0.95}{$
 x_i^{PEV}(\hat{\theta}) = 
 \begin{cases}
     h_i(\hat{\theta}_{-i}) - V^1_{-i}(\hat{\theta}, \pi^*) &    \text{\	     if $i$ succeeded,}\\
     h_i(\hat{\theta}_{-i}) - V^0_{-i}(\hat{\theta}, \pi^*) &      \text{\	     if $i$ failed.}
 \end{cases}
 $}
\end{equation}
where
\begin{itemize}
\item $h_i(\hat{\theta}_{-i}) = \sum_{j \in N\setminus \{i\}} \hat{v}_j(\pi^*(\hat{\theta}_{-i}), (0, \hat{p}_{-i}^{\pi^*(\hat{\theta}_{-i})}))$ is the maximum expected social welfare that the other agents can achieve without $i$'s participation, 
\item $V^1_{-i}(\hat{\theta}, \pi^*) = \sum_{j \in N\setminus \{i\}} \hat{v}_j(\pi^*(\hat{\theta}), (1, {p}_{-i}^{\pi^*(\hat{\theta})}))$
is the realized expected social welfare of all agents except $i$ under the efficient allocation $\pi^*(\hat{\theta})$ when $p_i^{\pi^*(\hat{\theta})} = 1$, i.e., $i$ succeeded. $V^0_{-i}(\hat{\theta}, \pi^*) = \sum_{j \in N\setminus \{i\}} \hat{v}_j(\pi^*(\hat{\theta}), (0, {p}_{-i}^{\pi^*(\hat{\theta})}))$ is the corresponding social welfare when $p_i^{\pi^*(\hat{\theta})} = 0$.
\end{itemize}
Note that $h_i(\hat{\theta}_{-i})$ is calculated according to what agents have reported, while $V^1_{-i}(\hat{\theta}, \pi^*), V^0_{-i}(\hat{\theta}, \pi^*)$ are based on the realization of their task completion, which is actually their true PoS as we used in the calculation. $x^{PEV}_i$ pays/rewards agent $i$ the social welfare increased by $i$ if she completed her tasks, otherwise penalizes her the social welfare loss due to her failure.

Porter et al.~(\citeyear{Porter_2008}) have shown that the mechanism $(\pi^*, x^{PEV})$ is ex-post truthful and individually rational %for the setting where there is one requester with multiple tasks to be allocated to multiple workers, each of whom has a cost for attempting a task regardless of her PoS, and 
if the dependencies between tasks are non-cyclical. In Theorem~\ref{thm_exic_com}, we show that $(\pi^*, x^{PEV})$ is ex-post truthful in general if agents' valuations satisfy a multilinearity condition (Definition~\ref{def_linear}), which generalizes the non-cyclical task dependencies condition applied in \cite{Porter_2008}.

\begin{definition}
\label{def_linear}
Valuation $v_i$ of $i$ is \textbf{multilinear in PoS} if for all type profiles $\theta \in \Theta$, for all allocations $\tau \in T$, for all $j\in N$, $v_i(\tau, p^\tau) = p_j^\tau \times v_i(\tau, (1, p_{-j}^\tau)) + (1-p_j^\tau) \times v_i(\tau, (0, p_{-j}^\tau))$.
\end{definition} 
Intuitively, $v_i$ is multilinear in PoS if all its variables but $p_j^\tau$ are held constant, $v_i$ is a linear function of $p_j^\tau$, which also means that agent $i$ is risk-neutral (with respect to $j$'s execution uncertainty). However, multilinearity in PoS does not indicate that $v_i$ has to be a linear form of $v_i(\tau, p^\tau)= b + a_1p_1^\tau + ... + a_np_n^\tau$, where $b, a_i$ are constant (see Table~\ref{tab_eg1} for example).

\subsection{Multilinearity in PoS is Sufficient for Truthfulness}
\begin{theorem}
\label{thm_exic_com}
Mechanism $(\pi^*, x^{PEV})$ is ex-post truthful %and individually rational 
if for all $i\in N$, $v_i$ is multilinear in PoS.
\end{theorem}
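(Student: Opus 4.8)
The plan is to show that, under multilinearity, the \emph{expected} PEV payment of agent $i$ collapses exactly to the classical Groves payment evaluated at realized PoS; once this is established, the usual Groves argument delivers ex-post truthfulness. Fix an agent $i$ and, as required by the ex-post notion, assume every other agent reports truthfully, so that $\hat\theta_{-i}=\theta_{-i}$ and $\hat v_j=v_j$, $\hat p_j = p_j$ for all $j\neq i$. Let $i$ report an arbitrary $\hat\theta_i\in\Theta_i$ and write $\tau=\pi^*(\hat\theta_i,\theta_{-i})$ for the resulting allocation, with $p^\tau=(p_i^\tau,p_{-i}^\tau)$ denoting the \emph{true} PoS profile at $\tau$.

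First I would write out $i$'s expected utility. Since payments are made \emph{after} execution, whether $i$ is rewarded or penalised is resolved by her \emph{true} probability of success $p_i^\tau$, not by anything she reports. Hence her expected payment is $h_i(\theta_{-i})-\bigl[p_i^\tau V^1_{-i}+(1-p_i^\tau)V^0_{-i}\bigr]$, where $V^1_{-i}=V^1_{-i}(\hat\theta_i,\theta_{-i},\pi^*)$ and $V^0_{-i}$ are as in Equation~\eqref{eq_commit_pay}, and her expected utility is $v_i(\tau,p^\tau)-h_i(\theta_{-i})+\bigl[p_i^\tau V^1_{-i}+(1-p_i^\tau)V^0_{-i}\bigr]$.

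The key step is to simplify the bracketed term. Because each $j\neq i$ reports truthfully, $V^1_{-i}=\sum_{j\neq i}v_j(\tau,(1,p_{-i}^\tau))$ and $V^0_{-i}=\sum_{j\neq i}v_j(\tau,(0,p_{-i}^\tau))$. Applying the multilinearity of each $v_j$ (Definition~\ref{def_linear}) in exactly the coordinate $p_i^\tau$ gives $p_i^\tau v_j(\tau,(1,p_{-i}^\tau))+(1-p_i^\tau)v_j(\tau,(0,p_{-i}^\tau))=v_j(\tau,p^\tau)$ for every $j\neq i$; summing over $j\neq i$ collapses the convex combination to $V_{-i}(\theta,\pi^*)=\sum_{j\neq i}v_j(\tau,p^\tau)$. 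Therefore $i$'s expected utility equals $\sum_{j\in N}v_j(\tau,p^\tau)-h_i(\theta_{-i})$, i.e. the realized social welfare of allocation $\tau$ minus a constant that is independent of $i$'s report.

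Finally I would invoke efficiency of $\pi^*$: reporting truthfully yields $\tau=\pi^*(\theta)$, which maximises the total realized social welfare $\sum_{j\in N}v_j(\cdot\,,\cdot)$ over all allocations at the true type profile, whereas any misreport produces some allocation that can only do weakly worse against that same objective. Since $h_i(\theta_{-i})$ does not depend on $i$'s report, truthful reporting maximises $i$'s expected utility, establishing ex-post truthfulness. I expect the only delicate points to be bookkeeping ones: insisting that the success/failure event is governed by $i$'s \emph{true} PoS rather than her report, and invoking multilinearity in the correct coordinate $p_i^\tau$ for the valuations $v_j$ of the \emph{other} agents. The multilinearity collapse itself is a one-line identity, so no genuinely hard estimate arises.
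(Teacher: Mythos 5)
Your proposal is correct and follows essentially the same route as the paper's own proof: write agent $i$'s expected utility, use multilinearity in the coordinate $p_i^\tau$ to collapse the convex combination $p_i^\tau V^1_{-i}+(1-p_i^\tau)V^0_{-i}$ into $\sum_{j\neq i}v_j(\tau,p^\tau)$, observe that the expected utility then equals realized social welfare minus the report-independent term $h_i(\theta_{-i})$, and invoke efficiency of $\pi^*$. The only (immaterial) difference is bookkeeping: the paper also decomposes $i$'s own valuation into success/failure terms and recombines them via multilinearity, whereas you read off $i$'s expected value as $v_i(\tau,p^\tau)$ directly from the model's utility definition and apply multilinearity only to the other agents' valuations.
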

\begin{proof}
According to the characterization of truthful mechanisms given by Proposition $9.27$ from \cite{nisan_algorithmic_2007}, we need to prove that for all $i\in N$, for all $\theta \in \Theta$:
\begin{enumerate}
\item $x^{PEV}_i(\theta)$ does not depend on $i$'s report, but only on the task allocation alternatives;
\item $i$'s utility is maximized by reporting $\theta_i$ truthfully if the others report $\theta_{-i}$ truthfully.
\end{enumerate}

From the definition of $x^{PEV}_i$ in \eqref{eq_commit_pay}, we can see that given the allocation $\pi^*({\theta})$, agent $i$ cannot change $V^1_{-i}({\theta}, \pi^*)$ and $V^0_{-i}({\theta}, \pi^*)$ without changing the allocation $\pi^*(\theta)$. Therefore, $x^{PEV}_i$ does not depend on $i$'s report, but only on the task allocation outcome $\pi^*(\theta)$.

In what follows, we show that for each agent $i$, if the others report types truthfully, then $i$'s utility is maximized by reporting her type truthfully.

Given an agent $i$' of type $\theta_i$ and the others' true type profile ${\theta}_{-i}$, assume that $i$ reported $\hat{\theta}_i \neq \theta_i$. For the allocation $\tau = \pi^*(\hat{\theta}_i, \theta_{-i})$, according to $x^{PEV}_i$, when $i$ finally completes her tasks, $i$'s utility is $u_i^1 = v_i(\tau, (1, p_{-i}^\tau)) - h_i({\theta}_{-i}) + V^1_{-i}((\hat{\theta}_i, \theta_{-i}), \pi^*)$ and her utility if she fails is $u_i^0 = v_i(\tau, (0, p_{-i}^\tau)) - h_i({\theta}_{-i}) + V^0_{-i}((\hat{\theta}_i, \theta_{-i}), \pi^*)$. Note that $i$'s expected valuation depends on her true valuation $v_i$ and all agents' true PoS. Therefore, $i$'s expected utility is:
\begin{align}
p_i^\tau\times& u_i^1 + (1-p_i^\tau)\times u_i^0 = \nonumber \\
&p_i^\tau\times v_i(\tau, (1, p_{-i}^\tau)) \label{eq_1} \\
&+ (1-p_i^\tau)\times v_i(\tau, (0, p_{-i}^\tau)) \label{eq_2} \\
&+ p_i^\tau\sum_{j \in N\setminus \{i\}} {v}_j(\tau, (1, p_{-i}^\tau)) \label{eq_3} \\
&+ (1-p_i^\tau)\sum_{j \in N\setminus \{i\}} {v}_j(\tau, (0, p_{-i}^\tau)) \label{eq_4} \\
&- h_i({\theta}_{-i}). \nonumber
\end{align} 
Since all valuations are multilinear in PoS, the sum of \eqref{eq_1} and \eqref{eq_2} is equal to $v_i(\tau, p^\tau)$, and the sum of \eqref{eq_3} and \eqref{eq_4} is $\sum_{j \in N\setminus \{i\}} {v}_j(\tau, p^\tau)$. Thus, the sum of \eqref{eq_1}, \eqref{eq_2}, \eqref{eq_3} and \eqref{eq_4} is the social welfare under allocation $\pi^*(\hat{\theta}_i, {\theta}_{-i})$. The social welfare is maximized when $i$ reports truthfully because $\pi^*$ maximizes social welfare (note that this is not the case when $\theta_{-i}$ is not truthfully reported).
Moreover, $h_i({\theta}_{-i})$ is independent of $i$'s report and is the maximum social welfare that the others can achieve without $i$. Therefore, by reporting $\theta_i$ truthfully, $i$'s utility is maximized. %and non-negative (i.e., individually rational). %Notice that, $i$'s utility does not depend on other commuters' true trips, i.e., no matter what others report, reporting truthfully is a dominant strategy for $i$.
\end{proof}

Theorem~\ref{thm_exic_com} shows that multilinearity in PoS is sufficient to truthfully implement $(\pi^*, x^{PEV})$ in an ex-post equilibrium (ex-post truthful), but not in a dominant strategy (truthful). It has been shown in similar settings that ex-post truthfulness is the best we can achieve here~\cite{Porter_2008,Ramchurn_2009,Stein_2011,Conitzer_2014}.

\comment{
It is not hard to show an example where an agent is incentivized to misreport if some agents have misreported.
%Ex-post truthfulness is also strongly applicable in domains like ridesharing because computing manipulations is both computationally hard and requiring the full knowledge of all commuters' reports. 
Take the example given in Table~\ref{tab_eg1} again and apply $(\pi^*, x^{PEV})$. If $\tau$ is chosen by the mechanism when both agents reported truthfully, i.e. $p_1^\tau(1+ p_2^\tau) - c > 0$, and there is no other allocation with positive social welfare, then agent $1$'s expected utility is $p_1^\tau(1+ p_2^\tau) - c$. Now assume that agent $2$ misreported a $\hat{p}_2^\tau < p_2^\tau$ such that $p_1^\tau(1+ \hat{p}_2^\tau) - c < 0$. Then the optimal allocation will be empty and gives both agents zero utility. In this case, agent $1$ might have a chance to correct agent $2$'s misreport by misreporting a $\hat{p}_1^\tau > p_1^\tau$ such that $\hat{p}_1^\tau(1+ \hat{p}_2^\tau) - c > 0$, which will bring $\tau$ back to the optimal allocation and gives agent $1$ a positive utility $p_1^\tau(1+ p_2^\tau) - c > 0$ as her utility does not depend on what PoS they have reported.
%Linear in commitment condition is a very natural condition on the commuters' valuation. Similar condition is implicitly assumed in task allocation domains~\cite{Porter_2008,Ramchurn_2009,Conitzer_2014}
}

\subsection{Multilinearity in PoS is also Necessary}% for Truthfully Implementing Porter's Mechanisms}
\label{sect_linearity_cont}
\noindent In the above we showed that multilinearity in PoS is sufficient for $(\pi^*, x^{PEV})$ to be ex-post truthful. Here we show that the multilinearity is also necessary. 

\begin{theorem}
\label{thm_necessary_lic}
If $(\pi^*, x^{PEV})$ is ex-post truthful for all type profiles $\theta \in \Theta$, then for all $i\in N$, $v_i$ is multilinear in PoS.
\end{theorem}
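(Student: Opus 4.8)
The plan is to argue by contraposition: assuming some agent's valuation violates multilinearity in PoS (Definition~\ref{def_linear}), I will construct a type profile on which an agent strictly gains by misreporting, contradicting ex-post truthfulness. So suppose $v_i$ is not multilinear in PoS. By Definition~\ref{def_linear} there is a type profile, an allocation $\tau$, a PoS profile $p^\tau$, and a coordinate $j\in N$ for which
$$\Delta \;:=\; \Big(p_j^\tau\, v_i(\tau,(1,p_{-j}^\tau)) + (1-p_j^\tau)\, v_i(\tau,(0,p_{-j}^\tau))\Big) - v_i(\tau,p^\tau) \;\neq\; 0.$$
The agent I will make deviate is agent $j$, the coordinate in which linearity fails; note this covers both $j=i$ and $j\neq i$.

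First I would isolate the right incentive identity by redoing the expected-utility computation of the proof of Theorem~\ref{thm_exic_com}, but for agent $j$ rather than an arbitrary manipulator. Exactly as in lines~\eqref{eq_1}--\eqref{eq_4}, when every other agent reports truthfully and the chosen allocation is $\sigma$, agent $j$'s expected utility equals $\big[p_j^\sigma\sum_{l\in N} v_l(\sigma,(1,p_{-j}^\sigma)) + (1-p_j^\sigma)\sum_{l\in N} v_l(\sigma,(0,p_{-j}^\sigma))\big] - h_j(\theta_{-j})$, where the bracket is the true social welfare with $j$'s own success linearized out and $h_j(\theta_{-j})$ does not depend on $j$'s report. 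Denote this bracket $\widetilde W_j(\sigma)$. The key observation is that $\pi^*$ selects the allocation maximizing the \emph{unlinearized} welfare $W(\sigma)=\sum_{l} v_l(\sigma,p^\sigma)$, whereas agent $j$'s utility tracks $\widetilde W_j(\sigma)$; if every valuation were multilinear in coordinate $j$ these two objectives would coincide, but the witness $\Delta\neq 0$ forces them apart at $\sigma=\tau$.

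Next I would build an explicit instance in which the two objectives disagree. I keep the valuations of all agents except $i$ multilinear in coordinate $j$ (so that they contribute equally to $W$ and $\widetilde W_j$), which yields $\widetilde W_j(\tau) = W(\tau) + \Delta$. I then introduce a single alternative allocation $\tau'$ whose agents' valuations are multilinear, so that $\widetilde W_j(\tau') = W(\tau')$, and I tune the others' valuations so that $W(\tau')$ lies strictly between $W(\tau)$ and $W(\tau)+\Delta$ (taking the relevant side according to the sign of $\Delta$). With this tuning, truthful reporting makes $\pi^*$ pick one of $\{\tau,\tau'\}$ while agent $j$'s utility $\widetilde W_j$ is strictly larger at the other. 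Because $x^{PEV}_j$ and $h_j$ sum only over $N\setminus\{j\}$ and hence are independent of $j$'s own report, agent $j$ can steer $\pi^*$ to its preferred allocation by an appropriate misreport of its valuation or probability of success on that allocation, without altering the value of its realized expected utility for a fixed allocation. This gives $j$ a strict gain over truthful reporting, contradicting ex-post truthfulness and establishing the contrapositive required by Theorem~\ref{thm_necessary_lic}.

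The main obstacle I anticipate is the construction in the last step: I must exhibit the alternative allocation $\tau'$ and the accompanying types within the model so that (i) all valuations other than $v_i$ are genuinely multilinear in coordinate $j$, (ii) the separating value $W(\tau')$ strictly between $W(\tau)$ and $W(\tau)+\Delta$ is actually attainable by admissible types, and (iii) agent $j$ possesses a feasible misreport that flips $\pi^*$ to the preferred allocation. The incentive identity for agent $j$ is essentially a re-run of the computation already carried out for Theorem~\ref{thm_exic_com} and is routine; the delicate point is guaranteeing the feasibility of this separating instance and of the forcing misreport uniformly in the sign of $\Delta$ and in whether $j=i$ or $j\neq i$.
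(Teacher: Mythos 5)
Your strategy coincides with the paper's: make the agent $j$ named in the multilinearity violation the deviator, write $j$'s expected utility under an allocation $\sigma$ as a ``linearized'' welfare $\widetilde W_j(\sigma)$ minus the report-independent term $h_j(\theta_{-j})$, observe that $\pi^*$ instead maximizes the unlinearized welfare $W$, keep every agent other than $i$ multilinear so that $\widetilde W_j(\tau)-W(\tau)=\Delta\neq 0$, and then use a second allocation to pull the two objectives apart. The gap is in the step where you ``introduce a single alternative allocation $\tau'$ whose agents' valuations are multilinear, so that $\widetilde W_j(\tau')=W(\tau')$.'' You can impose multilinearity at $\tau'$ on every agent \emph{except} $i$, but not on $i$: $v_i$ is the given non-multilinear valuation, and its defect at $\tau'$, namely $\hat\Delta := p_j^{\tau'}\, v_i(\tau',(1,p_{-j}^{\tau'})) + (1-p_j^{\tau'})\, v_i(\tau',(0,p_{-j}^{\tau'})) - v_i(\tau',p^{\tau'})$, is fixed by $v_i$ and the PoS profile, not by your tuning. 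So in fact $\widetilde W_j(\tau')=W(\tau')+\hat\Delta$, and if it happens that $\hat\Delta=\Delta$, then $\widetilde W_j$ and $W$ differ by the same constant on $\{\tau,\tau'\}$, they rank the two allocations identically, no steering misreport is profitable, and your contradiction never materializes. Note also that the obstacle you flag at the end (attainability of an intermediate value of $W(\tau')$, and the existence of a forcing misreport) is not the real difficulty --- both are easy, since $\hat v_j(\tau',p^{\tau'})$ can be scaled arbitrarily and $x^{PEV}_j$ is allocation-determined; the real difficulty is the uncontrollable defect $\hat\Delta$.

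This is exactly the point the paper's proof spends its effort on: it writes $\delta_i$ for the defect at $\tau$ and $\hat\delta_i$ for the defect at the alternative allocation $\hat\tau$, concedes that $\delta_i=\hat\delta_i$ is possible, and then argues that \emph{some} setting with $\delta_i\neq\hat\delta_i$ must exist, because a defect that is the same in every setting must be identically zero (plug $p_j^\tau\in\{0,1\}$ into Definition~\ref{def_linear}: the two sides coincide, so the defect vanishes at the endpoints), which would contradict $\delta_i\neq 0$. Only after securing $\delta_i\neq\hat\delta_i$ does it run the two-case tuning of $\hat v_j(\hat\tau,p^{\hat\tau})$ on the sign of $\delta_i-\hat\delta_i$, which is the counterpart of your ``$W(\tau')$ strictly between'' condition, with the wedge being $\delta_i-\hat\delta_i$ rather than $\Delta$ alone. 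Your argument can be repaired in the same spirit --- for instance, choose the witness types so that $p_j^{\tau'}\in\{0,1\}$, which forces $\hat\Delta=0$ and legitimizes $\widetilde W_j(\tau')=W(\tau')$ --- but this step must be stated and proved; it is the mathematical heart of the necessity direction, not a routine feasibility check.
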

\begin{proof}
By contradiction, assume that $v_i$ of agent of type $\theta_i$ is not multilinear in PoS, i.e., there exist a ${\theta}_{-i}$, an allocation $\tau \in T$, and a $j\in N$ (without loss of generality, assume that $j \neq i$) such that: 
\begin{equation}
\label{eq_assume}
\scalebox{0.85}{$
v_i(\tau, p^\tau) \neq {p}_j^\tau\times v_i(\tau, (1, p_{-j}^\tau)) + (1-{p}_j^\tau)\times v_i(\tau, (0, p_{-j}^\tau))
$}
\end{equation}
Under efficient allocation choice function $\pi^*$, it is not hard to find a type profile $\hat{\theta}_{-i}$ such that $\pi^*(\theta_i, \hat{\theta}_{-i}) = \tau$ and the PoS profile is the same between $\theta_{-i}$ and $\hat{\theta}_{-i}$. We can choose $\hat{\theta}_{-i}$ by setting $\hat{v}_j(\tau, p^\tau)$ to a sufficiently large value for each $j\neq i$.

Applying $(\pi^*, x^{PEV})$ on profile $(\theta_i, \hat{\theta}_{-i})$, when $j$ finally successfully completes her tasks $\tau_j$,
her utility is $u_j^1 = \hat{v}_j(\tau, (1, p_{-j}^\tau)) - h_j((\theta_i, \hat{\theta}_{-i})_{-j}) + V^1_{-j}((\theta_i, \hat{\theta}_{-i}), \pi^*)$ and her utility if she fails is $u_j^0 = \hat{v}_j(\tau, (0, p_{-j}^\tau)) - h_j((\theta_i, \hat{\theta}_{-i})_{-j}) + V^0_{-j}((\theta_i, \hat{\theta}_{-i}), \pi^*)$. Thus, $j$'s expected utility is (note that $\hat{p}_j^\tau = p_j^\tau$):
\begin{align}
p_j^\tau\times& u_j^1 + (1-p_j^\tau)\times u_j^0 = \nonumber \\
&p_j^\tau\times v_i(\tau, (1, p_{-j}^\tau)) \label{eq_1-1} \\
&+ (1-p_j^\tau)\times v_i(\tau, (0, p_{-j}^\tau)) \label{eq_2-1} \\
&+ p_j^\tau\sum_{k \in N\setminus \{i\}} \hat{v}_k(\tau, (1, p_{-j}^\tau)) \label{eq_3-1} \\
&+ (1-p_j^\tau)\sum_{k \in N\setminus \{i\}} \hat{v}_k(\tau, (0, p_{-j}^\tau)) \label{eq_4-1} \\
&- h_j({\theta}_{-j}). \nonumber
\end{align}
Given the assumption \eqref{eq_assume}, terms \eqref{eq_1-1} and \eqref{eq_2-1} together can be written as $v_i(\tau, p^\tau) + \delta_i$ where $\delta_i = \eqref{eq_1-1} + \eqref{eq_2-1} - v_i(\tau, p^\tau)$. Similar substitutions can be carried out for all other agents $k\in N\setminus \{i\}$ in terms \eqref{eq_3-1} and \eqref{eq_4-1} regardless of whether $v_k$ is mutlilinear in PoS. After this substitution, $j$'s utility can be written as:
\begin{align}
%p_j\times u_j^1 + (1-p_j)\times u_j^0 =
%\underbrace{\sum_{k \in N} v_k(\pi^*(\theta), p)}_{(13)}
%+ \underbrace{\sum_{k \in N} \delta_k}_{(14)}
%- h_j(\theta_{-j}). \nonumber
p_j\times& u_j^1 + (1-p_j)\times u_j^0 = \nonumber \\
&v_i(\tau, p^\tau) + \sum_{k \in N\setminus\{i\}} \hat{v}_k(\tau, p^\tau)  \label{eq_3-2} \\
&+ \sum_{k \in N} \delta_k \label{eq_4-2} \\
&- h_j(\theta_{-j}). \nonumber
\end{align}
Now consider a suboptimal allocation $\hat{\tau} \neq \tau$, if $\hat{\tau}$ is chosen by the mechanism, then $j$'s utility can be written as:
\begin{align}
%\hat{u}_j =
%\underbrace{\sum_{k \in N} v_k(\hat{\pi}(\theta), p)}_{(15)}
%+ \underbrace{\sum_{k \in N} \hat{\delta}_k}_{(16)}
%- h_j(\theta_{-j}). \nonumber
\hat{u}_j =& \nonumber\\
&v_i(\hat{\tau}, p^{\hat{\tau}}) + \sum_{k \in N\setminus\{i\}} \hat{v}_k(\hat{\tau}, p^{\hat{\tau}})  \label{eq_3-3} \\
&+ \sum_{k \in N} \hat{\delta}_k \label{eq_4-3} \\
&- h_j(\theta_{-j}). \nonumber
\end{align}
In the above two utility representations, we know that terms $\eqref{eq_3-2} > \eqref{eq_3-3}$ 
because $\pi^*$ is efficient, but terms \eqref{eq_4-2} and \eqref{eq_4-3} 
can be any real numbers. 

\textit{In what follows, we tune the valuation of $j$ such that the optimal allocation is either $\tau$ or $\hat{\tau}$, and in either case $j$ is incentivized to misreport.}

In the extreme case where all agents except $i$'s valuations are multilinear in PoS, we have $\delta_{k} = 0, \hat{\delta}_{k} = 0$ for all $k \neq i$ in \eqref{eq_4-2} and \eqref{eq_4-3}. Therefore, $\sum_{k \in N} \delta_k = \delta_i \neq 0$ and $\sum_{k \in N} \hat{\delta}_k = \hat{\delta}_i$ (possibly $=0$). It might be the case that $\delta_i = \hat{\delta}_i$, but there must exist a setting where $\delta_i \neq \hat{\delta}_i$, otherwise $v_i$ is multilinear in PoS, because constant $\delta_i$ for any PoS does not violate the multilinearity definition.
\begin{enumerate}
\item If $\delta_i > \hat{\delta}_i$, we have $\eqref{eq_3-2} +\delta_i > \eqref{eq_3-3} +\hat{\delta}_i$. 
In this case, we can increase $\hat{v}_j(\hat{\tau}, p^{\hat{\tau}})$ such that $\hat{\tau}$ becomes optimal, i.e., $\eqref{eq_3-2} < \eqref{eq_3-3}$, but $\eqref{eq_3-2} +\delta_i > \eqref{eq_3-3} +\hat{\delta}_i$ still holds. 
Therefore, if $j$'s true valuation is the one that chooses $\hat{\tau}$ as the optimal allocation, then $j$ would misreport to get allocation $\tau$ which gives her a higher utility. 
\item If $\delta_i < \hat{\delta}_i$, we can easily modify $\hat{v}_j(\hat{\tau}, p^{\hat{\tau}})$ such that $\eqref{eq_3-2} +\delta_i < \eqref{eq_3-3} +\hat{\delta}_i$ but $\eqref{eq_3-2} > \eqref{eq_3-3}$ still holds. 
In this case, if $j$'s true valuation again is the one just modified, $j$ would misreport to get allocation $\hat{\tau}$ with a better utility.
\end{enumerate}
In both of the above situations, agent $j$ is incentivized to misreport, which contradicts that $(\pi^*, x^{PEV})$ is ex-post truthful. Thus, $v_i$ has to be multilinear in PoS.
\end{proof}

It is worth mentioning that Theorem~\ref{thm_necessary_lic} does not say that given a specific type profile $\theta$, all $v_i$ have to be multilinear in PoS for $(\pi^*, x^{PEV})$ to be ex-post truthful. 
%Take the example discussed in the beginning of this section, if $p_i \geq r_j$, then $i$ is not incentivized to misreport and $(\pi^*, x^{com})$ is ex-post truthful, although $j$'s valuation is not linear in commitment. However, since each commuter $i$ does not know the others' trips, to truthfully implement $(\pi^*, x^{com})$ in an ex-post equilibrium for all possible trips of the others, Theorem~\ref{thm_necessary_lic} says that $v_i$ has to be linear in commitment.
Take the delivery example from Table~\ref{tab_eg1} and change agent $2$'s valuation for $\tau$ %is not multilinear in PoS as described by Table~\ref{tab_eg2} (where agent $2$ is some kind of risk-averse with respect to agent $1$'s PoS). 
to be $v_2(\tau, p^{\tau}) = \mathbf{(p_1^{\tau})^2}\times p_2^{\tau}$ which is not multilinear in PoS.
It is easy to check that under this change, no agent can gain anything by misreporting if the other agent reports truthfully. However, given each agent $i$ of valuation $v_i$, to truthfully implement $(\pi^*, x^{PEV})$ in an ex-post equilibrium for all possible type profiles of the others, Theorem~\ref{thm_necessary_lic} says that $v_i$ has to be multilinear in PoS, otherwise, there exist settings where some agent is incentivized to misreport.

\comment{
\begin{table}
\centering
	\begin{tabular}{| c | c |}
	  \hline			
	   $p_i$ & $v_i$ \\ \hline
	   $p_1^{\tau} = 0$ & $v_1(\tau, p^{\tau}) = 0$ \\ \hline
	   $p_2^{\tau} = 1$ & $v_2(\tau, p^{\tau}) = {\color{red}(p_1^{\tau})^2}\times p_2^{\tau}$ \\ \hline
	   $p_2^{\tau^\prime} = 0.5$ & $v_2(\tau^\prime, p^{\tau^\prime}) = p_2^{\tau^\prime}$ \\
	  \hline  
	\end{tabular}
	\caption{Setting II for Figure~\ref{eg_task}\label{tab_eg2}}
\end{table}
}

%%%%%%%%%%%%%%%comment out%%%%%%%%%%%%%%
\comment{
We first demonstrate an intuitive example showing that if the valuations of all commuters except one are linear in commitment, then there exist settings where a commuter is incentivized to misreport in $(\pi^*, x^{com})$. Then we further prove that for all commuters $i$, if $v_i$ is not linear in commitment, then there exists a setting such that $(\pi^*, x^{com})$ is not ex-post truthful.

Consider a scenario of two commuters $i,j$ travelling on the same route at the same time, and assume that $i$ has a car with one extra seat to share and $j$ does not have a car to share with others. Therefore the only sharing allocation is that $j$ rides with $i$, if their total expected valuation is greater than what $i,j$ will have when they travel alone. Assume that the valuations for $i,j$ are defined as follows:
\begin{equation}
\label{eq_v_rider}
 v_i = 
 \begin{cases}
     \alpha_i\times p_i\times p_j & \text{if $i$ offers a ride to $j$,}\\
     -\infty & \text{if $i$ rides with $j$,}\\
     0 & \text{if $i$ travels alone.}
 \end{cases}
\end{equation}
where $\alpha_i \leq 0$ is a constant and represents the costs to $i$ for offering a ride to $j$.
\begin{equation}
\label{eq_v_rider2}
 v_j =
 \begin{cases}
     \beta_j\times p_i\times p_j & \text{if $j$ rides with $i$ and $p_i \geq r_j$,}\\
     0 & \text{if $j$ rides with $i$ and $p_i < r_j$,}\\
     -\infty & \text{if $j$ offers a ride to $i$,}\\
     0 & \text{if $j$ travels alone.}
 \end{cases}
\end{equation}
where $\beta_j \geq 0$ is a constant and represents the benefits, e.g., costs saved, that $j$ will receive via riding with $i$, and $r_j \in (0,1]$ is $j$'s minimum requirement on her driver's probability of commitment. %$r_j$ is a kind of reliability requirement of $j$ on her driver. 
If $p_i < r_j$, $j$ will not ride with $i$, i.e., $j$ does not want to ride with someone who is not very reliable. %does not have much guarantee to finish her trip.

It is easy to check that $v_i$ is linear in commitment, but $v_j$ is not. Assume that $p_i < r_j$, i.e. $i,j$ are not matched to share if they both report truthfully and therefore their utilities are zero. We will show that $i$ can misreport a probability of commitment $\hat{p}_i \geq r_j$ to gain a positive utility under $(\pi^*, x^{com})$ if $\alpha_i\times p_i\times p_j + \beta_j\times p_i\times p_j > 0$.

Since $i$'s true probability of commitment cannot be verified by $j$ or the system from whether $i$ commits, which is especially true if their probability of commitment changes every time they travel. %, especially if they are only matched very occasionally. Moreover, even the mechanism might not be able to verify the true probability of commitment of commuters from their passed participations, especially if their probability of commitment changes every time they travel. 
Thus, in the above example, $i$ can misreport $\hat{p}_i \geq r_j$ to get matched with $j$, and $i$'s payment will be:
\begin{equation*}
 x^{com}_i = 
 \begin{cases}
      -\beta_j\times p_j  & \text{ if $i$ committed,}\\
      0 & \text{ if $i$ did not commit.}
 \end{cases}
\end{equation*}
Then $i$'s expected utility is $p_i\times ( \alpha_i\times p_j + \beta_j\times p_j) + (1-p_i)\times 0 = \alpha_i\times p_i\times p_j + \beta_j\times p_i\times p_j$. If $\alpha_i\times p_i\times p_j + \beta_j\times p_i\times p_j > 0$, $i$ is incentivized to misreport $\hat{p}_i \geq r_j > p_i$.% {\color{red}That is $(\pi^*, x^{com})$ is not ex-post truthful in this scenario.}

The above example shows that even if only one commuter's valuation is not linear in commitment, there exist settings where $(\pi^*, x^{com})$ is not ex-post truthful. Theorem~\ref{thm_necessary_lic} further proves that linear in commitment becomes necessary for $(\pi^*, x^{com})$ to be ex-post truthful in general.
}
%%%%%%%%%%%%comment end%%%%%%%%%%%%%%

\subsection{Conditions for Achieving Individual Rationality}
\noindent
% Regarding individual rationality, Porter's mechanism does not guarantee it in the general model. 
PEV-based mechanism is individually rational in Porter et al.~(\citeyear{Porter_2008})'s specific setting. However, in the general model we consider here, it may not guarantee this property. 
For example, there is an allocation where an agent has no task to complete in an allocation, but has a negative valuation for the completion of the tasks assigned to the others (i.e. she is penalised if the others complete their tasks). If that allocation is the optimal allocation and the allocation does not change with or without that agent, then she will get a zero payment therefore a negative utility.

Proposition~\ref{pro_ir} shows by restricting agents' valuations to some typical constraint, PEV-based mechanism can be made individually rational. %that a natural condition on agent's valuation will make Porter's mechanism individually rational. %Let $\emptyset$ be the empty task, a sufficient and necessary condition for Porter's mechanism to be individually rational is given in the following theorem. 
The constraint says if an agent is not involved in a task allocation (i.e., when the tasks assigned to her is empty), she will not be penalised by the completion of the others' tasks.

\begin{proposition}
\label{pro_ir}
Mechanism $(\pi^*, x^{PEV})$ is individually rational if and only if for all $i\in N$, for all $\tau \in T$, if $\tau_i = \emptyset$, then $v_i(\tau, p^\tau) \geq 0$ for any $p^\tau \in [0,1]^N$.
\end{proposition}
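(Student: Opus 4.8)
The plan is to characterise individual rationality directly from Definition~\ref{def_ir}, i.e.\ by computing agent $i$'s expected utility when she reports her true type $\theta_i$ while the others report an arbitrary $\hat\theta_{-i}$. Writing $\tau=\pi^*(\theta_i,\hat\theta_{-i})$ and averaging the two branches of $x_i^{PEV}$ in \eqref{eq_commit_pay} over $i$'s own success and failure (using multilinearity in PoS, which is in force throughout this section, exactly the bookkeeping already done in the proof of Theorem~\ref{thm_exic_com}), the $h_i$ terms collect and the realised welfare terms combine, giving
$$ \mathbb{E}[u_i] \;=\; v_i(\tau,p^\tau) \;+\; \sum_{j\neq i}\hat v_j(\tau,p^\tau) \;-\; h_i(\hat\theta_{-i}). $$
Thus $i$'s expected utility is her marginal contribution: the social welfare at the efficient allocation $\tau$ (with $i$) minus the best welfare $h_i$ the remaining agents can secure without her. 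The whole proposition then reduces to deciding when this quantity is non-negative for every $\hat\theta_{-i}$.

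For the ``if'' direction I would let $\tau'$ be the allocation that attains $h_i$, namely $\pi^*(\hat\theta_{-i})$ evaluated with $i$ fixed at PoS $0$. Since giving a PoS-$0$ agent tasks cannot raise the others' welfare, this maximizer may be taken with $\tau'_i=\emptyset$, so that the others' welfare is insensitive to $i$ and the hypothesis yields $v_i(\tau',p^{\tau'})\geq 0$. Efficiency of $\pi^*$ then gives that the welfare of $v_i+\sum_{j\neq i}\hat v_j$ at $\tau$ is at least its value at $\tau'$, and dropping the non-negative term $v_i(\tau',p^{\tau'})$ leaves exactly $h_i$; chaining these inequalities yields $\mathbb{E}[u_i]\geq 0$.

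For the ``only if'' direction I would argue contrapositively, making the example that precedes the statement precise. Suppose some allocation $\tau$ has $\tau_i=\emptyset$ yet $v_i(\tau,p^\tau)<0$ for some $p^\tau$. Using the same construction as in Theorem~\ref{thm_necessary_lic} (inflate the others' reported valuations on $\tau$), I would choose $\hat\theta_{-i}$ so that $\tau$ is efficient both with and without $i$. Because $\tau_i=\emptyset$, the others' welfare does not depend on $i$'s outcome, so $V^1_{-i}=V^0_{-i}=h_i$; hence $x_i^{PEV}=0$ in both branches and $u_i=v_i(\tau,p^\tau)<0$, contradicting individual rationality.

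I expect the ``if'' direction to be the main obstacle, and within it two points need care: first, justifying that the allocation defining $h_i$ can be taken with $\tau'_i=\emptyset$ so that the hypothesis is applicable (this is where the assumption that a non-participating agent is neither helped nor harmed by the others' completions is used); and second, reconciling the others' \emph{reported} PoS that select $\tau$ with their \emph{realised} PoS that appear in $\mathbb{E}[u_i]$, so that the efficiency comparison and the value of $h_i$ are expressed in a common form. The ``only if'' direction is essentially the worked example made precise and should be routine once the construction of Theorem~\ref{thm_necessary_lic} is reused.
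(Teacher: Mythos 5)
Your proposal is correct and takes essentially the same route as the paper's proof: you write the expected utility as the marginal contribution (welfare of the efficient allocation minus $h_i$), prove the ``if'' direction by noting that the allocation attaining $h_i$ assigns $i$ no tasks, so the hypothesis plus efficiency of $\pi^*$ chain to give non-negativity, and prove the ``only if'' direction by the same zero-payment counterexample in which $\tau$ is efficient both with and without $i$. Your explicit handling of multilinearity (to collapse the success/failure branches) and of reported versus realised PoS makes precise two points the paper's proof passes over silently, but it is the same argument, not a different one.
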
  
\begin{proof}
(If part) For all type profile $\theta \in \Theta$, for all $i\in N$, let $\tau = \pi^*(\theta)$ and $\hat{\tau} = \pi^*(\theta_{-i})$, $i$'s utility is given by $\sum_{k\in N} v_k(\tau, p^\tau) - \sum_{k\in N\setminus \{i\}} v_k(\hat{\tau}, p_{-i}^{\hat{\tau}})$, where the first term is the optimal social welfare with $i$'s participation and the second term is the optimal social welfare without $i$'s participation. It is clear that $\hat{\tau}_i = \emptyset$ as $\hat{\tau}$ is the optimal allocation without $i$'s participation. $\sum_{k\in N\setminus \{i\}} v_k(\hat{\tau}, p_{-i}^{\hat{\tau}}) + v_i(\hat{\tau}, p^{\hat{\tau}})$ is the social welfare for allocation $\hat{\tau}$. Since $\tau$ is optimal, we get that $\sum_{k\in N} v_k(\tau, p^\tau) \geq \sum_{k\in N\setminus \{i\}} v_k(\hat{\tau}, p_{-i}^{\hat{\tau}}) + v_i(\hat{\tau}, p^{\hat{\tau}})$. Thus, $\sum_{k\in N} v_k(\tau, p^\tau) - \sum_{k\in N\setminus \{i\}} v_k(\hat{\tau}, p_{-i}^{\hat{\tau}}) \geq v_i(\hat{\tau}, p^{\hat{\tau}}) \geq 0$, i.e. $i$'s utility is non-negative.

(Only if part) If there exist an $i$ of type $\theta_i$, a $\tau$, a $p^\tau \in [0,1]^N$ such that $\tau_i = \emptyset$ and $v_i(\tau, p^\tau) < 0$. We can always find a profile $\hat{\theta}_{-i}$ s.t. $\hat{p}^\tau = p^\tau$ and $\pi^*(\theta_i, \hat{\theta}_{-i}) = \pi^*(\hat{\theta}_{-i}) = \tau$. It is clear that the payment for $i$ is $0$ and her utility is $v_i(\tau, p^\tau) < 0$ (violates individual rationality).
\end{proof}

\section{Extension to Trust-Based Environments}
\label{sect_trust}
So far, we have assumed that each agent can correctly predict her probability of success (PoS) for each task, but in some environments, an agent's PoS is not perfectly perceived by the agent alone. Instead, multiple other agents may have had prior experiences with a given agent and their experiences can be aggregated to create a more informed measure of the PoS for the given agent. This measure is termed the trust in the agent~\cite{Ramchurn_2009}.
%an agent's probability of success (PoS) might also depend on what the others believe she can do, termed the others' trust in the agent~\cite{Ramchurn_2009}. 
Ramchurn et al. have extended Porter et al.'s mechanism to consider agents' trust information and showed that the extension is still truthfully implementable in their settings.
%Ramchurn et al. have extended Porter's original setting to a trust-based environment where each agent also privately observes a PoS of each other agent for each task\footnote{This kind of trust or believe between agents can be built from their history interactions or external information.}. In their extension, each agent's actual PoS is aggregated from all agents' PoS information about the agent. 

Similarly, our general model can also be extended to handle the trust information by changing singleton $p_i^\tau$ to be a vector $p_i^\tau = (p_{i,1}^\tau, ..., p_{i,j}^\tau, ..., p_{i,n}^\tau)$ where $p_{i,j}^\tau$ is the probability that $i$ believes $j$ will complete $j$'s tasks in $\tau$. Agent $i$'s aggregated/true PoS for task $\tau$ is given by a function $f_i^\tau: [0,1]^N \rightarrow [0,1]$ with input $(p_{1,i}^\tau,..., p_{n,i}^\tau)$. Given this extension, for any type profile $\theta$, let $\rho_i^\tau = f_i^\tau(p_{1,i}^\tau,..., p_{n,i}^\tau)$, the social welfare of a task allocation $\tau$ is defined as: 
\begin{equation}
\label{eq_trusteff}
\sum_{i\in N} v_i(\tau, \rho^\tau)
\end{equation}
where $\rho^\tau = (\rho_1^\tau, ..., \rho_n^\tau)$.

As shown in \cite{Ramchurn_2009}, PEV-based mechanism can be extended to handle this trust information by simply updating the efficient allocation choice function $\pi^*$ with the social welfare calculation given by Equation \eqref{eq_trusteff}. Let us call the extended mechanism $\mathcal{M}^{trust}$.
Ramchurn et al. have demonstrated that $\mathcal{M}^{trust}$ is ex-post truthful in their settings when the PoS aggregation function is the following linear form:
\begin{equation}
\label{eq_linear_aggregation}
f_i^\tau(p_{1,i}^\tau,..., p_{n,i}^\tau) = \displaystyle \sum_{j\in N} \omega_j\times p_{j,i}^\tau
\end{equation}
where constant $\omega_j\in [0,1]$ and $\sum_{j\in N} \omega_j = 1$.

Following the results in Theorems~\ref{thm_exic_com} and \ref{thm_necessary_lic}, we generalize Ramchurn et al.'s results to characterize all aggregation forms under which $\mathcal{M}^{trust}$ is ex-post truthful. % ly implementable.

\begin{definition}
\label{def_linear_trust}
A PoS aggregation $f_i = (f_i^\tau)_{\tau\in T}$ is \textbf{multilinear} if for all $j \in N$, for all $\tau \in T$, for all $\theta \in \Theta$, $f_i^\tau(p_{1,i}^\tau,...,p_{j,i}^\tau,...,p_{n,i}^\tau) = p_{j,i}^\tau \times f_i^\tau(p_{1,i}^\tau,...,p_{j-1,i}^\tau,1, p_{j+1,i}^\tau,...,p_{n,i}^\tau) + (1-p_{j,i}^\tau) \times f_i^\tau(p_{1,i}^\tau,...,p_{j-1,i}^\tau,0, p_{j+1,i}^\tau,...,p_{n,i}^\tau)$.
\end{definition}

Definition \ref{def_linear_trust} is similar to the multilinear in PoS definition given by Definition \ref{def_linear}. Multilinear aggregations cover the linear form given by Equation \eqref{eq_linear_aggregation}, but also consist of many non-linear forms such as $\prod_{j\in N} p_{j,i}^\tau$. %, \max_{j\in N} p_{j,i}^\tau, \min_{j\in N} p_{j,i}^\tau$$. 
The following corollary directly follows Theorems~\ref{thm_exic_com} and \ref{thm_necessary_lic}. We omit the proof here. The basic idea of the proof is that given a multilinear function, if we substitute another multilinear function (with no shared variables) for one variable of the function, then the new function must be multilinear.

\begin{corollary}
Trust-based mechanism $\mathcal{M}^{trust}$ is ex-post truthful if and only if for all $i\in N$, $v_i$ is multilinear in PoS, and the PoS aggregation $f_i$ is multilinear.
\end{corollary}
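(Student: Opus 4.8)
The plan is to deduce the corollary from Theorems~\ref{thm_exic_com} and~\ref{thm_necessary_lic} by treating $\mathcal{M}^{trust}$ as the basic mechanism $(\pi^*, x^{PEV})$ applied to \emph{composite} valuations. For each agent $i$ define $w_i(\tau, (p_{k,l}^\tau)_{k,l\in N}) := v_i(\tau, \rho^\tau)$ with $\rho_j^\tau = f_j^\tau(p_{1,j}^\tau,\dots,p_{n,j}^\tau)$, so that the welfare objective~\eqref{eq_trusteff} is exactly $\sum_{i\in N} w_i$ and $\mathcal{M}^{trust}$ coincides with running $(\pi^*, x^{PEV})$ on the profile $(w_i)_{i\in N}$. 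The single verifiable event per agent is still whether $j$ succeeds, now with probability $\rho_j^\tau$; the structural observation that drives everything is that the opinions entering $\rho_j^\tau$, namely the block $\{p_{k,j}^\tau : k\in N\}$, are pairwise disjoint across $j$, since an opinion about $j$ never appears in $\rho_{j'}^\tau$ for $j'\neq j$.

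For the sufficiency (``if'') direction I would invoke the composition fact flagged after the statement: substituting a multilinear function on a private block of variables for each argument of a multilinear function yields a function that is again multilinear in the union of the blocks. Taking the outer function to be $v_i(\tau,\cdot)$, multilinear in $\rho$ by Definition~\ref{def_linear}, and the inner substitutions $\rho_j^\tau = f_j^\tau(\cdot)$, each multilinear by Definition~\ref{def_linear_trust} on the disjoint blocks above, gives that every $w_i$ is multilinear in its arguments. Theorem~\ref{thm_exic_com} then applies to $(\pi^*, x^{PEV})$ on the $w_i$'s, yielding ex-post truthfulness of $\mathcal{M}^{trust}$ in the valuation and the opinion reports at once; note this covers the linear aggregation~\eqref{eq_linear_aggregation} of Ramchurn et al.\ as a special case.

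For necessity (``only if'') I would peel the two conditions apart rather than invert the composition fact (whose converse fails in general). First, freezing all reported opinions and letting them realise an arbitrary aggregate $\rho\in[0,1]^N$ turns the valuation-report subgame into an instance of the basic model, so Theorem~\ref{thm_necessary_lic} forces each $v_i$ to be multilinear in PoS. Second, with every $v_i$ now known to be multilinear, I would replay the Theorem~\ref{thm_necessary_lic} construction one level down on a single opinion coordinate $p_{i,j}^\tau$: if $f_j^\tau$ failed multilinearity there, I would inflate the other agents' reported values to pin $\pi^*$ at a chosen $\tau$ and tune a suboptimal alternative $\hat\tau$ so that the curvature of $\rho_j^\tau$ in $p_{i,j}^\tau$ creates a $\delta$-gap between the welfare the mechanism computes and agent $i$'s true expected welfare, making a misreport of her opinion about $j$ strictly profitable and contradicting ex-post truthfulness.

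The main obstacle is this opinion-layer half of necessity. Unlike a valuation report, an opinion about another agent influences the welfare and the $V^1_{-i},V^0_{-i}$ terms only through the aggregate $\rho_j^\tau$, so I must check that the Theorem~\ref{thm_necessary_lic} construction retains enough freedom to expose the aggregation's non-linearity as an actual profitable deviation. The disjoint-block structure is exactly what makes this step legitimate: because the variables feeding $\rho_j^\tau$ are untouched by the other $\rho_{j'}^\tau$ and by the outer multilinearity of $v_i$, a curved $f_j^\tau$ cannot be silently compensated by curvature elsewhere, so the two necessity conditions are genuinely independent and both are forced.
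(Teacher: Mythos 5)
Your sufficiency direction follows the paper's intended route: the paper's one-sentence proof idea is exactly your composition fact (multilinear inner functions on disjoint variable blocks substituted into a multilinear outer function give a multilinear composite). One caveat even there: Theorem~\ref{thm_exic_com} does not literally apply to the composite valuations $w_i$ with the opinion coordinates playing the role of PoS variables, because in the basic model each reported PoS is the Bernoulli parameter of a separately verifiable completion event, whereas the individual opinions $p_{k,j}^\tau$ are tied to no verifiable event --- only the aggregates $\rho_j^\tau$ are. So the proof of Theorem~\ref{thm_exic_com} has to be rerun with the expectation taken over the $n$ completion events (parameters $\rho_j^\tau$); when you do that, the telescoping step consumes only the multilinearity of $v_i$ in $\rho$ (Definition~\ref{def_linear}), and the multilinearity of the composite in the opinion coordinates is never actually used. (Your first necessity step, forcing each $v_i$ to be multilinear via Theorem~\ref{thm_necessary_lic}, is fine modulo the implicit assumption that the aggregates $f_j^\tau$ can realise the PoS profiles at which non-multilinearity of $v_i$ is witnessed.)

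The genuine gap is your second necessity step, and the construction you sketch would fail. The $\delta$-gap in the proof of Theorem~\ref{thm_necessary_lic} exists because an agent's expected utility is, by the laws of probability, \emph{linear} in her completion probability, while the welfare the mechanism maximises is evaluated at the reported PoS and may be non-linear in that same variable; the wedge is between two functional forms of one variable. In the trust setting the completion probability of $j$ \emph{is} the aggregate $\rho_j^\tau$, and expected utility is automatically linear in $\rho_j^\tau$; the curvature of $f_j^\tau$ in $p_{i,j}^\tau$ never enters any expectation, because opinions determine $\rho^\tau$ deterministically and nothing is ever averaged over an opinion coordinate. Concretely, once every $v_k$ is multilinear in $\rho$ (which your first step grants), agent $i$'s expected utility under any fixed allocation $\sigma$ equals $\sum_k v_k(\sigma,\rho^\sigma)-h_i$ with $\rho^\sigma$ the \emph{true} aggregates: the payments in \eqref{eq_commit_pay} are settled on realised completions and $h_i$ is independent of $i$'s report, so $i$'s reported opinions influence nothing except which allocation is chosen, and under truthful reporting by the others the mechanism already picks the allocation maximising $i$'s own objective. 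Hence no opinion misreport is strictly profitable whatever the curvature of $f_j^\tau$, and the ``gap between the welfare the mechanism computes and agent $i$'s true expected welfare'' that your argument relies on vanishes at truthful reports. Your approach therefore proves only the $v_i$ half of necessity; forcing multilinearity of $f_i$ needs a different lever entirely, and since the paper omits its proof it offers no hint of one --- indeed, the analysis above suggests this half of the ``only if'' claim should itself be treated with caution.
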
 

For $\mathcal{M}^{trust}$ to be individually rational, the constraint specified in Proposition~\ref{pro_ir} is still sufficient and necessary, if we change $h_{-i}$ in the payment definition (Equation~\eqref{eq_commit_pay}) to be the optimal social welfare that the others can achieve without $i$, but assume that $i$ offered the worst trust in the others (see \cite{Ramchurn_2009} for more details).

\comment{\color{red}
For $\mathcal{M}^{trust}$ to be individually rational, the sufficient and necessary condition specified in Proposition~\ref{pro_ir} is not enough. The issue comes from the definition of $h_{-i}$ in the payment setting (Equation~\eqref{eq_commit_pay}), which is the optimal expected social welfare without $i$'s participation. Without trust, condition specified in Proposition~\ref{pro_ir} guarantees that $h_{-i}$ cannot be greater than the social welfare of the efficient allocation. However, with trust, $h_{-i}$ might be greater than the social welfare of the efficient allocation. This is because $i$'s trust in the others might bring a negative impact on the social welfare (e.g. $i$ may think that all the other agents are unlikely to complete their tasks). To combat this problem, Ramchurn et al.~(\citeyear{Ramchurn_2009}) have proposed a new form of $h_i$, which is the optimal social welfare that the other agents can achieve but assume that $i$ holds the worst trust in them:
$$h_i^{trust}(\hat{\theta}_{-i}) = \arg\min_{p_i^\tau\in [0,1]^N}  \sum_{j \in N\setminus \{i\}} \hat{v}_j(\tau, \hat{\rho}_{-i}^\tau)$$
where $\tau = \pi^*(\hat{\theta}_{-i})$ and $\hat{\rho}_j^\tau = f_j^\tau(\hat{p}_{1,j}^\tau,..., p_{i,j}^\tau ,..., \hat{p}_{n,j}^\tau)$ (note that $p_{i,j}^\tau$ is used in the aggregation). 

With the updated form of $h_i$, the condition given in Proposition~\ref{pro_ir} is also sufficient and necessary for $\mathcal{M}^{trust}$ to be individually rational.}

%Under the linear PoS aggregation, they proposed a generalized version of Porter's mechanism to handle the additional trust information and demonstrated that their mechanism is still ex-post truthfully implementable. The main difference in this trust-based environment is that all agents (including those who have no task to complete in a task allocation) might contribute to the selection of the optimal task allocation, depending on how the PoS aggregation is defined. Therefore, to incentivize all agents to report their types truthfully, we should also share the social welfare gain to those who have done nothing in the optimal task allocation. 

\section{Discussions}
\subsection{Link to General Interdependent Valuations}
\label{sect_link}
So far, we have characterised the applicability of PEV-based mechanism and its extension with trust in a general task allocation setting. We should also note that there exists a body of research for general interdependent valuations such as~\cite{Milgrom_1982,Jehiel_2001}. Hence, in what follows we draw the parallels between the two areas and compare and contrast their key results and assumptions.
%Here we draw the link between their work and other prior work that have focused on interdependent valuations.
%In the rest, we compare our results with the related work in the general setting.

%The main feature of our model is that agents' valuation is interdependent via their execution uncertainty. This kind of interdependency is very different from what we have seen in the general interdependent valuation settings~\cite{Milgrom_1982,Jehiel_2001}. 

The work of \cite{Jehiel_2001} is especially interesting to this study, because they have identified a necessary condition for implementing an efficient and Bayes-Nash truthful\footnote{\emph{Bayes-Nash truthful} is weaker than ex-post truthful and it assumes that all agents know the correct probabilistic distribution of each agent's type.} mechanism (see Theorem 4.3 in \cite{Jehiel_2001}). However, their setting and the necessary condition do not apply to our setting, because:
\begin{enumerate}
\item The model in \cite{Jehiel_2001} can only model one special setting of our problem, namely the setting where the tasks between agents are independent. Also it is impossible to model trust at the same time.
\item The mechanism considered in \cite{Jehiel_2001} has no ability to verify agents' reports. 
\end{enumerate}
Therefore, we can see that our problem is a very special interdependent valuation setting, which allows the mechanism to partially verify agents' reports and to design mechanisms with better performance.

\comment{
We will briefly introduce its model and a key result from there, and check how they can or cannot be applied in our model. The model is the following:
\begin{itemize}
\item There are $K$ social alternatives (link to the allocations $T$ in our model), and $N$ agents.
\item Each agent $i$ has a type (or signal) $s_i$ drawn from a space $S_i \subseteq \mathbb{R}^{K\times N}$ according to a continuous density function $f_i(s_i) > 0$ and $f_i$ is common knowledge. Coordinate $s_{i,j}^k$ of $s_i$ influences the valuation of agent $j$ in alternative $k$.
\item One alternative $k$ will be chosen, and $i$'s valuation for $k$ is defined as:
\begin{equation}
\label{eq_general_interv}
v_i^k(s^k_{1,i},...,s_{n,i}^k) = \sum_{j\in N} \alpha_{j,i}^k\times s_{j,i}^k
\end{equation}
where parameter $\alpha_{j,i}^k \geq 0$ is common knowledge.
\end{itemize}
One of their main results is showing a necessary condition for truthfully implement an efficient mechanism (see Theorem~\ref{thm_jehiel}).

First, we will see how to use their model for our setting. To convert our model (without trust) to their model, let $K= T$, for each agent $i$, type $s_i \in \mathbb{R}^{K\times N}$ is defined as:
\begin{itemize}
\item For all $k\in K$, $s_{i,j}^k = p_i^k$ for all $j\in N$, where $p_i^k$ is drawn from $[0,1]$ with a density function $f$\footnote{We can set $f$ to be any random distribution, which is not important in our model.}. That is, there is only one signal from $i$ for each $k$, which is $i$'s PoS for $k$. %We simplify $s_i$'s space to $[0,1]^K$.
%\item $s_{i,j}^k$ is drawn from $\{0,1\}$ with density function $f_{i,j}^k(1) = p_{i}^k$, for all $j\neq i$, for all $k\in K$.
%\item $s_{i,i}^k = g_i^k(e_i^k)$, where $e_i^k$ is drawn from $\{0,1\}$ with density function $f_{i,i}^k(1) = p_{i}^k$, for all $k\in K$. Note that $g_i^k$ is independent of the other agents' PoS.
\item Applying \eqref{eq_general_interv}, $v_i^k(s^k_{1,i},...,s_{n,i}^k)$ is a linear function of all parameters $p_j^k$, and parameter $\alpha_{i,j}^k \geq 0$ represents the value $j$ gets if $i$ completes her tasks $k_i$ in $k$.
\end{itemize}
%With this conversion, if we apply \eqref{eq_general_interv} to compute $i$'s valuation for allocation $k$, we get that $v_i^k(s^k_{1,i},...,s_{n,i}^k)$ is a linear function of all parameters $p_j^k$. %If $g_i^k$ is linear, then $i$'s valuation is also linear on $p_{i}^k$. 
%Linear form valuations only happen in our model when the tasks between agents are independent, and the common knowledge $\alpha_{i,j}^k \geq 0$ represents the value $j$ gets if $i$ completes her tasks $k_i$ in $k$. 

We can see that the above general setting can only model the setting where the tasks between agents are independent and also their valuations become public\footnote{Public valuation does not affect our results because the main challenge is that their private PoS creates valuation interdependencies.}.

More importantly, \cite{Jehiel_2001} have identified the following necessary condition for implementing an efficient and Bayes-Nash truthful mechanism. We discuss how that condition maps to our results.
\begin{theorem}{[Theorem 4.3 in \cite{Jehiel_2001}]}
\label{thm_jehiel}
If there exists an efficient and Bayes-Nash truthful mechanism, then the following must hold:
\begin{equation}
\label{eq_ic_jehiel}
\frac{\alpha_{i,i}^{\hat{k}}}{\alpha_{i,i}^k} = \frac{\sum_{j\in N}\alpha_{i,j}^{\hat{k}}}{\sum_{j\in N}\alpha_{i,j}^{k}}
\end{equation}
for all $i$ of type $s_i$, for all $k \neq \hat{k} \in K$, if $\alpha_{i,i}^k \neq 0$ and there exist $s_{-i} \neq \hat{s}_{-i}$ such that $\pi^*(s_i, s_{-i}) = k$ and $\pi^*(s_i, \hat{s}_{-i}) = \hat{k}$.
\end{theorem}

In our model, condition~\eqref{eq_ic_jehiel} indicates that the total value that $i$ will bring to all agents for successfully complete her tasks $k_i$ in $k$ is $\delta$ times of the value $i$ brings to herself, where $\delta \geq 0$ is constant for all $k$. %that might be select by the efficient allocation. 
In other words, if $i$ prefers tasks $k_i$ to $\hat{k}_i$, then all agents together also prefer $k_i$ to $\hat{k}_i$. 

\comment{
Thus, for any signal report profile $s=(s_1,...,s_n)$, given condition \eqref{eq_ic_jehiel}, the social welfare for any allocation $k$ is defined as:
\begin{equation}
\sum_{i\in N} \sum_{j\in N}\alpha_{i,j}^{k} \times s_i^k = \sum_{i\in N} \delta\times \alpha_{i,i}^{k} \times s_i^k
\end{equation}
}

However, PEV-based mechanism is ex-post truthful (stronger than Bayes-Nash truthful) iff the valuations are multiliner in PoS (Theorems~\ref{thm_exic_com} and \ref{thm_necessary_lic}). In the above model, the definition of the valuation given by \eqref{eq_general_interv} satisfies multiliner in PoS without the necessary condition \eqref{eq_ic_jehiel}. Therefore, Theorem~\ref{thm_jehiel} does not map to our results. This is because PEV-based mechanism utilizes the fact that we can partially verify agents' PoS by paying them according to their realized completion of their tasks, which is not assumed/possible in \cite{Jehiel_2001}'s general setting. 
}

%%%%%%%%%%%%different conversion (not interesting)%%%%%%%%%%%%%
\comment{
Convert our model (without trust) to the above model, let $K= T$, for each agent $i$, signal/type $s_i \in \{0,1\}^{K\times N}$ is defined as:
\begin{itemize}
\item For all $k\in K$, $s_{i,j}^k = y$ for all $j\in N$, where $y$ is drawn from $\{0,1\}$ with density function $f(1) = p_{i}^k$ and $f(0) = 1-p_{i}^k$. That is, there is only one signal from $i$ for each $k$, which affects all agents and we can simplify $s_i$'s space to $\{0,1\}^K$.
%\item $s_{i,j}^k$ is drawn from $\{0,1\}$ with density function $f_{i,j}^k(1) = p_{i}^k$, for all $j\neq i$, for all $k\in K$.
%\item $s_{i,i}^k = g_i^k(e_i^k)$, where $e_i^k$ is drawn from $\{0,1\}$ with density function $f_{i,i}^k(1) = p_{i}^k$, for all $k\in K$. Note that $g_i^k$ is independent of the other agents' PoS.
\end{itemize}
Note that, after the conversion, agents report either $0$ or $1$ instead of $p_i^k$ for each $k$, and $p_i^k$ becomes a common knowledge for drawing the value $1$. We can think that agents now report their task execution results ($1$ for success and $0$ for failure) and their probability of success $p_i^k$ is commonly known.

If we apply \eqref{eq_general_interv} to compute $i$'s valuation for allocation $k$, we get that $v_i^k(s^k_{1,i},...,s_{n,i}^k)$ is a linear function of all parameters $p_j^k$. %If $g_i^k$ is linear, then $i$'s valuation is also linear on $p_{i}^k$. 
Linear form valuations only happen in our model when the tasks between agents are independent, and $\alpha_{i,j}^k$ represents the value $j$ gets if $i$ completes her tasks $k_i$ in $k$. 
%Therefore, the above general interdependent model cannot model all these multilinear valuations we discussed earlier.

More importantly, \cite{Jehiel_2001} have identified the following necessary condition for truthfully implement an efficient and Bayes-Nash truthful mechanism. We will see how that condition maps to our model.
\begin{theorem}{[Theorem 4.3 in \cite{Jehiel_2001}]}
If there exists an efficient and Bayes-Nash truthful mechanism, then the following must hold:
\begin{equation}
\label{eq_ic_jehiel}
\frac{a_{i,i}^{\hat{k}}}{\alpha_{i,i}^k} = \frac{\sum_{j\in N}a_{i,j}^{\hat{k}}}{\sum_{j\in N}\alpha_{i,j}^{k}}
\end{equation}
for all $i$ of type $s_i$, for all $k \neq \hat{k} \in T$, if $\alpha_{i,i}^k \neq 0$ and there exist $s_{-i} \neq \hat{s}_{-i}$ such that $\pi^*(s_i, s_{-i}) = k$ and $\pi^*(s_i, \hat{s}_{-i}) = \hat{k}$.
\end{theorem}

In our model, the above condition means that the total value that $i$ will bring to all agents for successfully complete her tasks $k_i$ in $k$ is $\delta$ times of the value $i$ brings to herself, where $\delta \geq 0$ is a constant for all $k$. In other words, if $i$ prefers task $k_i$ to $\hat{k}_i$, then all the other agents together also prefer $k_i$ to $\hat{k}_i$. 

\begin{theorem}
For the task allocation that can be modelled by \cite{Jehiel_2001}'s general interdependent valuation setting, if condition \eqref{eq_ic_jehiel} holds, then the Groves mechanism can be truthfully implemented.
\end{theorem}
\begin{proof}
For any signal report profile $s=(s_1,...,s_n)$, from condition \eqref{eq_ic_jehiel}, the social welfare for any allocation $k$ is defined as:
\begin{equation}
\sum_{i\in N} \sum_{j\in N}\alpha_{i,j}^{k} \times s_i^k = \sum_{i\in N} \delta\times \alpha_{i,i}^{k} \times s_i^k
\end{equation}
Thus, the efficient allocation can be calculated as follows without the constant factor $\delta$:
\begin{equation}
\label{eq_eff_new_model}
\pi^*(s) = \arg\max_{k\in K} \sum_{i\in N} \alpha_{i,i}^{k} \times s_i^k
\end{equation}
This means that the valuation interdependencies look disappear under the condition \eqref{eq_ic_jehiel}. Therefore, Groves mechanism can be truthfully implemented in a dominant strategy. That is, we do not even need to assume that agents' PoS $p_i$ is publicly known, which was assumed for converting our model in the above.
\end{proof}

However, PEV-based mechanism does not require the necessary condition \eqref{eq_ic_jehiel} to be ex-post truthful, because PEV-based mechanism utilizes the fact that we can partially verify agents' PoS by paying them according to their realized task completion, which is not assumed/possible in \cite{Jehiel_2001}'s general setting.}

\subsection{When Agents are Not Risk-Neutral}
We have shown that as soon as agents are risk-neutral with respect to their execution uncertainty, PEV-based mechanism is sufficient to provide incentives for agents to reveal their true types. However, in many real-world applications, participants are often not risk-neutral. For instance, when we reserve a ride from a taxi/carsharing company to catch a flight, we certainly do not want to take risk to get an unreliable booking. On the other hand, we often face challenging tasks that are very unlikely to be successfully completed (for example open research questions and financial investments), but we are very willing to take risks to try. Our results indicate that, to handle these non-risk-neutral settings, we need better solutions. %and we are not aware of any solution for the problem. 

Furthermore, when agents are not risk-neutral, individual rationality (Definition~\ref{def_ir}) needs to be redefined, as the current definition assumes that agents are risk-neutral with respect to their execution uncertainty.

\subsection{Challenge of the Efficient Allocation Design}
In our model, we assumed that the set of possible task allocation outcomes are given and the efficient task allocation is chosen from that set. It is worth mentioning that given a specific task allocation setting, finding an efficient allocation may not come so easy, e.g.,~\cite{Ramchurn_2009,Stein_2011,Feige_2011,Conitzer_2014}. If it is computationally hard to get an efficient outcome, there exist techniques to tackle it without violating the truthfulness properties, e.g.,~\cite{Nisan_2007}. 

\section{Conclusions}
\label{sect_con}
We studied a general task allocation problem where multiple agents collaboratively accomplish a set of tasks, but they may fail to successfully complete tasks assigned to them. To design an efficient task allocation mechanism for this problem, we showed that post-execution verification based mechanism is truthfully implementable, if and only if all agents are risk-neutral with respect to their execution uncertainty. We also showed that trust information between agents can be integrated into the mechanism without violating its properties, if and only if the trust information is aggregated by a multilinear function. This characterisation will help us further study specific task allocation settings. As mentioned in the above discussions, one very interesting future work is to design efficient mechanisms for task allocation settings with non-risk-neutral participants.

\comment{
We have explored the issue of incentive mechanism design in a ridesharing setting where commuters have uncertainty of completing their trips. We have shown that the class of Groves mechanisms are hardly applicable in this setting and therefore proposed the commit-based-pay mechanism which pays commuters according to the realization of the commitments of their trips. We have further demonstrated that the commit-based-pay mechanism is ex-post truthful, the best incentive we can provide in this setting without sacrificing social welfare, if and only if the commuters' valuations satisfy the linear in commitment condition. %We conjecture that our results can be generalized to other domains dealing with similar execution uncertainties such as task allocation domains~\cite{Porter_2008}.

Our work also leaves several directions for future research. The linear in commitment condition suggests that we need other solutions to offer incentives in settings where a commuter may only share with those commuters who have less uncertainty about their trips. Except the incentive problem, there are other important properties of the system that have not been touched in this work, especially the proposed mechanisms might run a large deficit~\cite{Myerson_1983} and the scheduling problem is computationally hard. Moreover, in real-world applications, commuters might not have the perfect knowledge of their travel uncertainty and we may consider discretizing the uncertainty. 

%One is to investigate incentive ridesharing mechanisms that guarantee certain reliability. For instance, some commuters may only want to share with people who have high probability of undertaking their trips, in which case their valuation is not linear in commitment. Porter's mechanism~\cite{Porter_2008} and its variations guarantee that the agents' utility is maximized in expectation, but they might get a negative utility for one instance of their action execution, which may not be desirable in ridesharing for people who only use the services occasionally, say, tourists. Moreover, the scheduling problem is computationally hard and the system may need to be subsidized~\cite{Zhao_2014}. %Other issues such as imperfect knowledge of the commuters' travel uncertainty and the commuters' dynamic participation are also worth further investigation~\cite{Parkes_OnlineMD_2007}. 
%Also commuters may have imperfect knowledge of their travel uncertainty and may join the system on short notice~\cite{Parkes_OnlineMD_2007}.
%{\color{red}(move to discussion) Noticing that the computation of an efficient allocation is a challenging problem and we leave it for future work~\cite{Kamar_2009}.}
%Discussion on
%\begin{itemize}
%\item computational costs
%\item deficit problem
%\item uncertainty is not perfectly known by commuters
%\item repeating trips 
%\item {\color{red}Theorem \ref{thm_exic_com} and \ref{thm_necessary_lic} can be generalized to other similar settings such as task allocations mentioned in the introduction.}
%\end{itemize}
\comment{
\noindent In this paper, we have studied the mechanism design problem of a novel auction-based ridesharing system, which requires each commuter to report a trip that she is planning to finish and especially the probability that she is going to commit/execute the trip. More importantly, we do not restrict the valuation format of the commuters, although we mostly focused on the valuations without externalities, which covers most of the valuation domains of ridesharing. Given the commuters' trip and valuation reports, the goal of the system is to maximize social welfare (i.e., efficiency) and incentivize commuters' participation (i.e., truthfulness) via automatically computing schedules and payments for all commuters. To achieve the goal, we showed that VCG is hardly applicable due to the VCG payment setting depends on commuters' reported probability of commitment which might be different from their true/realized probability of commitment. However, if commuters' probabilities of commitment are publicly known, then VCG is applicable to all valuation domains without externalities. Given the limited applicability of VCG, we proposed another efficient mechanism which pays commuters according to their realized commitments, while the VCG payments are fixed before they commit their trips.  We proved that the new mechanism is Nash truthful for all valuations without externalities and satisfying a natural linearity condition in their commitments. However, it is still very hard to achieve completely truthfulness for the second mechanism, because a commuter's payment still depends on others' reported probabilities of commitment.

This is the first time that commuters' uncertainty of committing their trips is considered in a ridesharing system design. Considering this uncertainty, we showed the applicability of VCG and the commit-based-pay mechanism we proposed under very broad valuation domains, which will further guide us not only for designing other ridesharing systems but also for modelling of commuters' valuations with or without considering their uncertainty of commitment.

There are many directions worth further investigation. In order to achieve more general results, we have not looked at many specific valuation domains rather than their classifications, so it would be very interesting to model and study some of them thoroughly. The computational hardness of computing the allocation and the payment has not been touched in this work, which becomes more challenging with the probability of commitment. Moreover, we have not found a completely truthful mechanism when their probabilities of commitment are private. It is worth checking whether there exists any meaningful truthful mechanism under this situation. As mentioned, if we consider commuters' probability of commitment as a kind of trust for them given by, say, the system, this probability becomes public and VCG can often be applied. However, trust information has been used differently in different environments and if it depends on agents' behaviour, agents might be incentivized to manipulate, e.g., \textit{ebay}. Therefore, how trust systems can be deployed in ridesharing is another very interesting direction.}
}

\bibliographystyle{aaai}
\bibliography{aaai16rs}

\end{document}